\newcommand {\ent} {\mathrel{{\scriptstyle\mid\!\sim}}}
\newcommand {\sx} {\langle}
\newcommand {\dx} {\rangle}
\newcommand {\emme} {\mathcal{M}}
\newcommand {\enne} {\mathcal{N}}
\newcommand {\tc} {\mid}
\newcommand {\vuoto} {\emptyset}
\newcommand{\tip}{{\bf T}}
\newcommand{\lc}{\mathcal{LC}}
\newcommand{\alc}{\mathcal{ALC}}
\newcommand{\el}{\mathcal{EL}}
\newcommand{\elpb}{{\mathcal{EL}}^{+}_{\bot}}
\newcommand{\be}{\begin{enumerate}}
\newcommand{\ee}{\end{enumerate}}
\newcommand{\hide}[1]{}
\def \cases{\left \{\begin{array}{l}}
\def \endcases{\end{array}\right .}
\newcommand {\ri} {\rightarrow}
\newcommand {\Ri} {\Rightarrow}
\newcommand {\bes} {\begin{description}}
\newcommand{\ens} {\end{description}}
\newcommand {\la} {\langle}
\newcommand {\ra} {\rangle}
\newcommand {\beq} {\begin{quote}}
\newcommand {\enq} {\end{quote}}
\newcommand {\bit} {\begin{itemize}}
\newcommand {\enit} {\end{itemize}}
\newenvironment{pozz}{\color{black}}{\color{black}}
\def \ri{\rightarrow}
\def \Ri{\Rightarrow}
\begin{document}
\bibliographystyle{plain}

\title{Weighted defeasible knowledge bases and \\
a multipreference semantics \\ 
for a deep neural network model}



\author{Laura Giordano \and Daniele Theseider Dupr{\'{e}} }

\institute{DISIT - Universit\`a del Piemonte Orientale, 
 Alessandria, Italy  
}

\authorrunning{ }
\titlerunning{ }

 \maketitle
 

\begin{abstract}
In this paper we investigate the relationships between a multipreferential semantics for defeasible reasoning in knowledge representation and  a deep neural network model. Weighted knowledge bases for description logics are considered under a ``concept-wise" multipreference semantics. The semantics is further extended to fuzzy interpretations and exploited to provide a preferential interpretation of Multilayer Perceptrons. 

\end{abstract}

\vspace{-0.1cm}
\section{Introduction}
\vspace{-0.1cm}

Preferential approaches have been used to provide axiomatic foundations of non-mono- tonic  and 
common sense reasoning 
\cite{Delgrande:87,Makinson88,Pearl:88,KrausLehmannMagidor:90,Pearl90,whatdoes,BenferhatIJCAI93}.
They have been extended to description logics (DLs), to deal with inheritance with exceptions in ontologies,
by allowing for non-strict forms of inclusions,
called {\em typicality or defeasible inclusions}, 
with different preferential semantics \cite{lpar2007,sudafricaniKR,FI09},
and closure constructions \cite{casinistraccia2010,CasiniDL2013,AIJ15,Pensel18,CasiniStracciaM19,AIJ2020}. 


In this paper, we exploit a concept-wise multipreference semantics  
as a semantics for weighted knowledge bases,  i.e. knowledge bases in which defeasible or typicality inclusions of the form $\tip(C) \sqsubseteq D$ (meaning ``the typical $C$'s are $D$'s" or ``normally $C$'s are $D$'s") are given a positive or negative weight. 
This multipreference semantics, which takes into account preferences with respect to different concepts,  has been first introduced  as a semantics for ranked DL knowledge bases  \cite{iclp2020}. 
For weighted knowledge bases, we develop a different semantic closure construction, although in the spirit of other semantic constructions in the literature. We further extend the multipreference semantics to the fuzzy case.

The concept-wise multipreference semantics 
has been shown to have some desired properties from the knowledge representation point of view \cite{iclp2020,NMR2020}, and a related semantics with multiple preferences has also been proposed in the first-order logic setting by Delgrande and Rantsaudis \cite{Delgrande2020}. In previous work  \cite{CILC2020}, the concept-wise multipreference semantics has been used to provide a preferential interpretation of Self-Organising Maps \cite{kohonen2001}, psychologically and biologically plausible neural network models.
In this paper, we aim at investigating its relationships with another neural network model, Multilayer Perceptrons.

We consider a multilayer neural network after the training phase, when the synaptic weights have been learned, 
to show that  the neural network can be given a preferential DL semantics with multiple preferences, as well as a semantics based on fuzzy DL \linebreak interpretations 
and another one combining fuzzy interpretations with multiple preferences. 
The three semantics allow the input-output behavior of the network to be captured by interpretations built over a set of input stimuli through a simple construction, which exploits the activity level of neurons for the stimuli. 
Logical properties can be verified over such models by model checking.


To prove that the fuzzy multipreference interpretations, built from the network for a given set of input stimuli, 
are models of the neural network in a logical sense, we map the multilayer network to a conditional knowledge base, i.e., a set of weighted defeasible inclusions. 
We also provide an account of our approach in probabilistic DLs.

A logical interpretation of a neural network  
can be useful from the point of view of explainability, in view of a trustworthy, reliable and explainable AI \cite{Adadi18,Guidotti2019,Arrieta2020}, 
and can potentially be exploited as the basis for an integrated use of symbolic reasoning 
and neural models.

\section{The description logic $\alc$ and $\el$}  \label{sec:ALC}

In this section we recall the syntax and semantics of the description logic $\alc$ \cite{handbook} and of its lightweight fragment $\el$ \cite{rifel}  at the basis of OWL2 EL Profile.

Let ${N_C}$ be a set of concept names, ${N_R}$ a set of role names
  and ${N_I}$ a set of individual names.  
The set  of $\alc$ \emph{concepts} (or, simply, concepts) can be
defined inductively: \\ 
- $A \in N_C$, $\top$ and $\bot$ are {concepts};\\
- if $C$ and $ D$ are concepts, and $r \in N_R$, then $C \sqcap D,\; C \sqcup D,\; \neg C, \; \forall r.C,\; \exists r.C$ 
are {concepts}.

%

\noindent
A knowledge base (KB) $K$ is a pair $({\cal T}, {\cal A})$, where ${\cal T}$ is a TBox and
${\cal A}$ is an ABox.
The TBox ${\cal T}$ is  a set of concept inclusions (or subsumptions) $C \sqsubseteq D$, where $C,D$ are concepts.
The  ABox ${\cal A}$ is  a set of assertions of the form $C(a)$ and $r(a,b)$
where $C$ is a  concept, $a$ an individual name in $N_I$ and $r$ a role name in $N_R$.

An  $\alc$ {\em interpretation}  is defined as a pair $I=\langle \Delta, \cdot^I \rangle$ where:
$\Delta$ is a domain---a set whose elements are denoted by $x, y, z, \dots$---and 
$\cdot^I$ is an extension function that maps each
concept name $C\in N_C$ to a set $C^I \subseteq  \Delta$, 
each role name $r \in N_R$ to  a binary relation $r^I \subseteq  \Delta \times  \Delta$,
and each individual name $a\in N_I$ to an element $a^I \in  \Delta$.
It is extended to complex concepts  as follows:

$\top^I=\Delta$,  \ \ \ \ \ \ \  $\bot^I=\vuoto$,  \ \ \ \ \ \ \  $(\neg C)^I=\Delta \backslash C^I$,

$(\exists r.C)^I =\{x \in \Delta \tc \exists y.(x,y) \in r^I \ \mbox{and}  \ y \in C^I\}$,   \ \ \ \ \ \ \ $(C \sqcap D)^I=C^I \cap D^I$, 

$(\forall r.C)^I =\{x \in \Delta \tc \forall y. (x,y) \in r^I \Ri y \in C^I\}$,   \ \ \ \ \ \ \ \ \ $(C \sqcup D)^I=C^I \cup D^I$.


\noindent
The notion of satisfiability of a KB  in an interpretation and the notion of entailment are defined as follows:

\begin{definition}[Satisfiability and entailment] \label{satisfiability}
Given an $\lc$ interpretation $I=\langle \Delta, \cdot^I \rangle$: 

	- $I$  satisfies an inclusion $C \sqsubseteq D$ if   $C^I \subseteq D^I$;
	
	-   $I$ satisfies an assertion $C(a)$ (resp., $r(a,b)$) if $a^I \in C^I$ (resp.,  $(a^I,b^I) \in r^I$).
	

\noindent
 Given  a KB $K=({\cal T}, {\cal A})$, 
 an interpretation $I$  satisfies ${\cal T}$ (resp. ${\cal A}$) if $I$ satisfies all  inclusions in ${\cal T}$ (resp. all assertions in ${\cal A}$);
 $I$ is a \emph{model} of $K$ if $I$ satisfies ${\cal T}$ and ${\cal A}$.

 A subsumption $F= C \sqsubseteq D$ (resp., an assertion $C(a)$, $r(a,b)$),   {is entailed by $K$}, written $K \models F$, if for all models $I=$$\sx \Delta,  \cdot^I\dx$ of $K$,
$I$ satisfies $F$.

\end{definition}
Given a knowledge base $K$,
the {\em subsumption} problem is the problem of deciding whether an inclusion $C \sqsubseteq D$ is entailed by  $K$.

In the logic $\el$ \cite{rifel},
concepts are restricted to  $C \ \ := A \tc \top \tc C \sqcap C \tc \exists r.C $, i.e., union, complement and universal restriction are not $\el$ constructs. In the following, we will also consider the boolean fragment of $\alc$ only including constructs $\sqcap$, $\sqcup$, $\neg$.

\section{Fuzzy description logics} \label{sec:fuzzyDL}

Fuzzy description logics have been widely studied in the literature for representing vagueness in DLs \cite{Straccia05,Stoilos05,LukasiewiczStraccia09,PenalosaARTINT15,BobilloStracciaEL18}, 
based on the idea that concepts and roles can be interpreted 
as fuzzy sets and fuzzy binary relations.

As in Mathematical Fuzzy Logic \cite{Cintula2011} a formula has a degree of truth in an interpretation, rather than being either true or false,
in a fuzzy DL axioms 
are associated with a degree of truth (usually in the interval  $[0, 1]$). 
In the following we shortly recall the semantics of a fuzzy extension of $\alc$ referring to the survey by Lukasiewicz and Straccia \cite{LukasiewiczStraccia09}.
%
We limit our consideration 
to a few features of a fuzzy DL 
and, in particular, we omit considering datatypes.

A {\em fuzzy interpretation} for $\alc$ is a pair $I=\langle \Delta, \cdot^I \rangle$ where:
$\Delta$ is a non-empty domain and 
$\cdot^I$ is {\em fuzzy interpretation function} that assigns to each
concept name $A\in N_C$ a function  $A^I :  \Delta \ri [0,1]$,
to each role name $r \in N_R$  a function  $r^I:   \Delta \times  \Delta \ri [0,1]$,
and to each individual name $a\in N_I$ an element $a^I \in  \Delta$.
A domain element $x \in \Delta$ 
belongs to the extension of $A$ to some degree in $[0, 1]$, i.e., $A^I$ is a fuzzy set.

The  interpretation function $\cdot^I$ is extended to complex concepts as follows: 

$\mbox{\ \ \ }$ $\top^I(x)=1$, $\mbox{\ \ \ \ \ \ \ \ \ \  \ \ \ \ \ }$ $\bot^I(x)=0$,  $\mbox{\ \ \ \ \ \ \ \ \ \ \ \ \ \ \  \ \ \ \ \ }$  $(\neg C)^I(x)= \ominus C^I(x)$, 


$\mbox{\ \ \ }$  $(\exists r.C)^I(x) = sup_{y \in \Delta} \; \; r^I(x,y) \otimes C^I(y)$,  $\mbox{\ \ \ \ \ \ }$  $(C \sqcup D)^I(x) =C^I(x) \oplus D^I(x)$ 

$\mbox{\ \ \ }$  $(\forall r.C)^I (x) = inf_{y \in \Delta} \; \; r^I(x,y) \rhd C^I(y)$, $\mbox{\ \ \ \ \ \ \ }$  $(C \sqcap D)^I(x) =C^I(x) \otimes D^I(x)$ 

\noindent
where  $x \in \Delta$ and $\otimes$, $\oplus$, $\rhd$ and $\ominus$ are arbitrary but fixed t-norm, s-norm, implication function, and negation function, chosen among the combination functions of various fuzzy logics 
(we refer to \cite{LukasiewiczStraccia09} for details).

The  interpretation function $\cdot^I$ is also extended  to non-fuzzy axioms (i.e., to strict inclusions and assertions of an $\alc$ knowledge base) as follows:\\
 $(C \sqsubseteq D)^I= inf_{x \in \Delta}  C^I(x) \rhd D^I(x)$,
$\mbox{\ \ \ }$  $(C(a))^I=C^I(a^I)$,  $\mbox{\ \ \ }$  $(R(a,b))^I=R^I(a^I,b^I)$.

A {\em fuzzy $\alc$ knowledge base} $K$ is a pair $({\cal T}, {\cal A})$ where ${\cal T}$ is a fuzzy TBox  and ${\cal A}$ a fuzzy ABox. A fuzzy TBox is a set of {\em fuzzy concept inclusions} of the form $C \sqsubseteq D \;\theta\; n$, where $C \sqsubseteq D$ is an $\alc$ concept inclusion axiom, $\theta \in \{\geq,\leq,>,<\}$ and $n \in [0,1]$. A fuzzy ABox ${\cal A}$ is a set of {\em fuzzy assertions} of the form $C(a) \theta n$ or $r(a,b) \theta n$, where $C$ is an $\alc$ concept, $r\in N_R$, $a,b \in N_I$,  $\theta \in \{{\geq,}\leq,>,<\}$ and $n \in [0,1]$.
Following Bobillo and Straccia  \cite{BobilloStraccia2018}, we assume that fuzzy interpretations are {\em witnessed}, i.e., the sup and inf are attained at some point of the involved domain.
The notions of satisfiability of a KB  in a fuzzy interpretation and of entailment are defined in the natural way.
\begin{definition}[Satisfiability and entailment for fuzzy KBs] \label{satisfiability}
A  fuzzy interpretation $I$ satisfies a fuzzy $\alc$ axiom $E$ (denoted $I \models E$), as follows, 
 for $\theta \in \{\geq,\leq,>,<\}$:

- $I$ satisfies a fuzzy $\alc$ inclusion axiom $C \sqsubseteq D \;\theta\; n$ if $(C \sqsubseteq D)^I \theta\; n$;

- $I$ satisfies a fuzzy $\alc$ assertion $C(a) \; \theta \; n$ if $C^I(a^I) \theta\; n$;
 
- $I$ satisfies a fuzzy $\alc$ assertion $r(a,b) \; \theta \; n$ if $r^I(a^I,b^I) \theta\; n$.

\noindent
Given  a fuzzy KB $K=({\cal T}, {\cal A})$,
 a fuzzy interpretation $I$  satisfies ${\cal T}$ (resp. ${\cal A}$) if $I$ satisfies all fuzzy  inclusions in ${\cal T}$ (resp. all fuzzy assertions in ${\cal A}$).
A fuzzy interpretation $I$ is a \emph{model} of $K$ if $I$ satisfies ${\cal T}$ and ${\cal A}$.
A fuzzy axiom $E$   {is entailed by a fuzzy knowledge base $K$}, written $K \models E$, if for all models $I=$$\sx \Delta,  \cdot^I\dx$ of $K$,
$I$ satisfies $E$.
\end{definition}

%


\section{A concept-wise multipreference semantics for weighted KBs 
}

In this section  we develop an extension of $\el$ with defeasible inclusions having positive and negative weights, based on a concept-wise multipreference semantics first introduced for ranked $\elpb$ knowledge bases \cite{iclp2020}, where defeasible inclusions have positive integer ranks.
In addition to standard $\el$ inclusions $C \sqsubseteq D$ (called  {\em strict} inclusions in the following), the TBox ${\cal T}$ will also contain typicality inclusions of the form $\tip(C) \sqsubseteq D$, where $C$ and $D$ are $\el$ concepts.
A typicality inclusion $\tip(C) \sqsubseteq D$ means that ``typical C's are D's" or ``normally C's are D's" and corresponds to a conditional implication $C \ent D$ in Kraus, Lehmann and Magidor's (KLM) preferential approach \cite{KrausLehmannMagidor:90,whatdoes}. 
Such inclusions are defeasible, i.e.,  admit exceptions, while 
strict inclusions must be satisfied by all domain elements.
We assume that with each typicality inclusion is associated a weight $w$, a real number.
A positive weight supports the plausibility of a defeasible inclusion; a negative weight supports its  implausibility.

\subsection{Weighted $\el$ knowledge bases} \label{sec:WeightedBKs}


Let ${\cal C}= \{C_1, \ldots, C_k\}$ be a set of distinguished $\el$ concepts, the concepts for which defeasible inclusions are defined. 
A weighted TBox ${\cal T}_{C_i}$ is defined for each distinguished concept $C_i \in {\cal C}$ as a set of defeasible inclusions of the form $\tip(C_i) \sqsubseteq D$ with a weight.

A {\em weighted $\el$ knowledge base $K$ over ${\cal C}$} is a tuple $\langle  {\cal T}_{strict}, {\cal T}_{C_1}, \ldots, {\cal T}_{C_k}, {\cal A}  \rangle$, 
where ${\cal T}_{strict}$ is a set of strict concept inclusions, ${\cal A}$ is an ABox and, for each $C_j \in {\cal C}$,  ${\cal T}_{C_j}$ is a weighted TBox of defeasible inclusions,
$\{(d^i_h,w^i_h)\}$, where  each  $d^i_h$ is a typicality inclusion of the form $\tip(C_i) \sqsubseteq D_{i,h}$,  having weight $w^i_h$, a real number.

Consider, for instance, the ranked knowledge base $K =\langle {\cal T}_{strict},  {\cal T}_{Employee}, {\cal T}_{Student},$ $ {\cal A} \rangle$, over the set of distinguished concepts ${\cal C}=\{\mathit{Employee, Student}\}$, with empty ABox,
and with $ {\cal T}_{strict}$ containing the set of strict inclusions:

$\mathit{Employee  \sqsubseteq  Adult}$ \ \ \ \ \ \ \ \ \ \ \ \ \ \ \   $\mathit{Adult  \sqsubseteq  \exists has\_SSN. \top}$  
 \ \ \ \ \ \ \ \ \ \ \   $\mathit{PhdStudent  \sqsubseteq  Student}$ 


\noindent
The weighted TBox ${\cal T}_{Employee} $ 
contains the following weighted defeasible inclusions:

$(d_1)$ $\mathit{\tip(Employee) \sqsubseteq Young}$, \ \ - 50  \ \ \ \ \ \ \ \ \  \ \ \ \ \  

$(d_2)$ $\mathit{\tip(Employee) \sqsubseteq \exists has\_boss.Employee}$, \ \ 100


$(d_3)$ $\mathit{\tip(Employee) \sqsubseteq  \exists has\_classes.\top}$, \ \ -70;

\noindent
the weighted TBox ${\cal T}_{Student}$ contains the defeasible inclusions:

$(d_4)$ $\mathit{\tip(Student) \sqsubseteq Young}$, \ \ 90

$(d_5)$ $\mathit{\tip(Student) \sqsubseteq  \exists has\_classes.\top}$, \ \ 80   \ \ \ \ \ \ \ \ \ \  

$(d_6)$ $\mathit{\tip(Student) \sqsubseteq  \exists hasScholarship.\top}$, \ \  -30



%
%
%
%
%
%
%
\noindent
The meaning is that, while an employee normally has a boss, he is not likely to be young or have classes. Furthermore, between the two defeasible inclusions $(d_1)$ and $(d_3)$, the second one is considered less plausible than the first one. 
Given two employees Tom and Bob such that  Tom is not young, has no boss and has classes, while  Bob is not young, has a boss and has no classes, in the following, considering the weights above, we will regard Bob as being more typical than Tom as an employee. 



\subsection{The concept-wise preferences from weighted knowledge bases} \label{sec:multipref}

The concept-wise multipreference  semantics has been recently introduced as a semantics for ranked $\elpb$ knowledge bases   \cite{iclp2020}, 
 which 
 are inspired by Brewka's framework of basic preference descriptions  \cite{Brewka04}. 
For each concept $C_i \in {\cal C}$, a preference relation $<_{C_i}$ describes the preference among domain elements with respect to concept $C_i$. 
Each $<_{C_i}$ has the properties of preference relations in KLM-style ranked interpretations \cite{whatdoes}, that is,  $<_{C_i}$ is a modular and well-founded strict partial order. 
In particular, $<_{C_i}$ is {\em well-founded} 
if, for all $S \subseteq \Delta$, if $S\neq \emptyset$, then $min_{<_{C_i}}(S)\neq \emptyset$;
    $<_{C_i}$ is {\em modular} if,
for all $x,y,z \in \Delta$, $x <_{C_j} y$ implies ($x <_{C_j} z$ or $z <_{C_j} y$).

In the following we will recall the concept-wise semantics for $\alc$, which extends to its fragments considered in the following.
An $\alc$ interpretation, is extended with a collection of preference relations, one for each concept in ${\cal C}$. 
  \begin{definition}[Multipreference interpretation]\label{defi:multipreference}  
A {\em multipreference interpretation}  is a tuple
$\emme= \langle \Delta, <_{C_1}, \ldots, <_{C_k}, \cdot^I \rangle$, where:

(a) $\Delta$ is a 
domain, and $\cdot^I$ an interpretation function, as in $\alc$ interpretations;
 
(b) the $<_{C_i}$ are irreflexive, transitive, well-founded and modular relations over $\Delta$.


\end{definition}
The preference relation $<_{C_i}$ determines the relative typicality of domain individuals with respect to concept $C_i$.
For instance,  Tom may be more typical than Bob as a student ($\mathit{tom <_\mathit{Student} bob}$), but more exceptional as an employee ( $\mathit{bob <_\mathit{Employee} tom}$). 
The minimal $C_i$-elements with respect to $<_{C_i}$ 
are regarded as the most typical  $C_i$-elements.

While preferences do not need to agree, arbitrary conditional formulas cannot be evaluated with respect to a single preference relation. For instance, evaluating the inclusion ``Are typical employed students  young?" would require both the preferences $ <_\mathit{Student} $ and $ <_\mathit{Employee} $ to be considered. 
The approach proposed in \cite{iclp2020} 
is that of {\em combining} the preference relations $<_{C_i}$ into a single {\em global preference} relation $<$,
and than exploit the global preference for interpreting the typicality operator $\tip$, which may be applied to arbitrary concepts. 
A natural way to define the notion of global preference $<$ is by Pareto combination of the relations $<_{C_1}, \ldots,<_{C_k}$,
as follows:\\
$\mbox{ \ \ \ \ \ \  \ \ \ \ \ \ }$ $x <y  \mbox{ iff \ \ } $
$(i) \  x <_{C_i} y, \mbox{ for some } C_i \in {\cal C}, \mbox{ and }$ \\
$\mbox{ \ \ \ \ \ \  \ \ \ \ \ \ \ \ \ \ \ \ \ \ \ \ \ \ \ \ \ \ \ \ \ }  (ii)  \ \mbox{  for all } C_j\in {\cal C}, \;  x \leq_{C_j} y $.\\
A slightly more sophisticated notion of preference combination, which exploits a modified  Pareto condition
taking into account the specificity relation among concepts (such as, for instance, the fact that concept $\mathit{PhdStudent}$ is more specific than concept $\mathit{Student}$), has been considered for ranked knowledge bases \cite{iclp2020}. 

The addition of the global preference relation, leads to the definition of a notion of {\em concept-wise multipreference interpretation}, where concept $\tip(C)$ is interpreted as the set of all $<$-minimal  $C$ elements.
\begin{definition}\label{defi:cw-multipref}
A {\em concept-wise multipreference interpretation} (or cw$^m$-interpretation) 
 is a multipreference interpretation  $\emme= \langle \Delta, <_{C_1}, \ldots,<_{C_k}, <, \cdot^I \rangle$, according to Definition \ref{defi:multipreference}, such that
 the global preference relation $<$ is defined as above and $(\tip(C))^I = min_{<}(C^I)$,
where $Min_<(S)= \{u: u \in S$ and $\nexists z \in S$ s.t. $z < u \}$.
\end{definition}

\noindent
In the following, we define a notion of cw$^m$-model of a weighted $\el$ knowledge base $K$ as a cw$^m$-interpretation in which the preference relations $<_{C_i}$ are constructed from the typicality inclusions in the $ {\cal T}_{C_i}$'s.

\subsection{A semantics closure construction for weighted knowledge bases}  \label{sec:integer_weights}

Given a weighted knowledge base $K=\langle  {\cal T}_{strict}, {\cal T}_{C_1}, \ldots,$ $ {\cal T}_{C_k}, {\cal A}  \rangle$, where ${\cal T}_{C_i}=\{(d^i_h,w^i_h)\}$ for $i=1,\ldots,k$, 
and an 
$\el$ interpretation $I=\langle \Delta, \cdot^I \rangle$ satisfying all the strict inclusions  in ${\cal T}_{strict}$ and assertions in ${\cal A}$, we define a preference relation $<_{C_j}$ on $\Delta$ for each distinguished concepts $C_i \in {\cal C}$ through a {\em semantic closure construction}, a construction similar in spirit to the one considered by Lehmann for the lexicographic closure \cite{Lehmann95}, but based on a different seriousness ordering.
In order to define $<_{C_i}$ we consider the sum of the weights of the defeasible inclusions for $C_i$ satisfied by each domain element $x \in \Delta$; higher preference wrt $<_{C_i}$ is given to the domain elements whose associated sum (wrt $C_i$) is higher.

 %
 
First, let us define when a domain element $x \in \Delta$  satisfies/violates 
a typicality inclusion for $C_i$ wrt an $\el$ interpretation $I$. As $\el$ has the {\em finite model property} \cite{rifel}, we will restrict to $\el$  interpretations with a {\em finite} domain.
We say that {\em$ x \in \Delta$  satisfies $\tip(C_i) \sqsubseteq D$ in $I$}, if  $x   \not \in C_i^I$ or $x \in D^I$ (otherwise $x$ {\em violates} $\tip(C_i) \sqsubseteq D$ in $I$).
Note that, in an interpretation $I$, any domain element which is not an instance of $C_i$ trivially satisfies all defeasible inclusions $\tip(C_i) \sqsubseteq D$.
Such domain elements will be given the lowest preference with respect to $<_{C_i}$. 

Given an $\el$ interpretation $I=\langle \Delta, \cdot^I \rangle$ and a domain element $x \in \Delta$, we define  the {\em weight  of $x$ wrt $C_i$ in $I$} $W_i(x)$ 
considering the inclusions $(\tip(C_i) \sqsubseteq D_{i,h} \; , w^i_h)\in {\cal T}_{C_i}$:
\begin{align}\label{weight}
	W_i(x)  & = \left\{\begin{array}{ll}
						\sum_{{h: x \in D_{i,h}^I}} w_h^i & \mbox{ \ \ \ \  if }  x \in C_i^I \\
						- \infty &  \mbox{ \ \ \ \  otherwise }  
					\end{array}\right.
\end{align} 
where $-\infty$ is added at the bottom of all real values.

Informally, given an interpretation $I$, for  $x \in C_i^I$, the weight $W_i(x)$ of $x$ wrt $C_i$ is the sum of the weights of all the defeasible inclusions for $C_i$ satisfied by $x$ in $I$. The more plausible are the satisfied inclusions, the higher is the weight of $x$. 
For instance, in the example (Section \ref{sec:WeightedBKs}), assuming that domain elements $\mathit{tom,bob \in Employee^I}$, and that the typicality inclusion $(d_3)$ is satisfied by $\mathit{tom}$, while $(d_1),(d_2)$ are satisfied by $\mathit{bob}$, for $C_i=\mathit{Employee}$, we would get $W_i(tom)=-70$ and $W_i(bob)=100-70=30 $. 

Based on this notion of weight of a domain element wrt a concept, one can construct a preference relation $<_{C_i}$ from a given $\el$ interpretation $I$. A domain element $x$ is preferred to element $y$ wrt $C_i$ 
if the weight of the defaults in  ${\cal T}_{C_i}$ satisfied by $x$ is higher than weight of defaults in $ {\cal T}_{C_i}$ satisfied by $y$. 

\begin{definition}[Preference relation $<_{C_i}$ constructed from ${\cal T}_{C_i}$] \label{total_preorder}
Given a ranked knowledge base $K$ where, for all $j$, ${\cal T}_{C_j}=\{(d^i_h,r^i_h)\}$,
and an $\el$ interpretation $I=\langle \Delta, \cdot^I \rangle$, 
a  {\em preference relation $\leq_{C_i}$}  can be defined as follows:
For $x,y \in \Delta$,
\begin{align}\label{pref_rel}
x & \leq_{C_i}  y  \mbox{  \ \ iff \ \ }  W_i(x) \geq W_i(y)
\end{align}
\end{definition}
$\leq_{C_j}$  is a total preorder relation on $\Delta$. A strict preference relation (a strict modular partial order)  $<_{C_j}$ and  an equivalence relation $\sim_{C_j}$ can be defined on $\Delta$ by letting: $x <_{C_j} y$ iff ($x \leq_{C_j} y$ and  not $y \leq_{C_j} x$), and
$x \sim_{C_j} y$ iff ($x \leq_{C_j} y$ and $y \leq_{C_j} x)$.
Note that the domain elements which are instances of $C_i$  are all preferred (wrt  $<_{C_i}$) to the domain elements which are not instances of $C_i$. 
Furthermore, for all domain elements $x,y \not \in C_j^I$, $x \sim_{C_j} y$ holds.
The higher is the weight of an element wrt $C_i$ the more preferred is the element.
In the example, $W_i(bob)=30 > W_i(tom)=-70$  (for $C_i= \mathit{Employee}$) and, hence, $\mathit{bob <_{Employee} tom}$, i.e., Bob is more typical than Tom as an employee.

Following the same approach as for ranked $\el$ knowledge bases \cite{iclp2020}, we define a notion of 
{\em cw$^m$-model} for a weighted knowledge base $K$,
where each preference relation $<_{C_j}$ in the model is constructed from the  weighted TBox ${\cal T}_{C_j}$ according to Definition \ref{total_preorder} above, and the global preference is defined by combining the $<_i$'s. 
\begin{definition}[cw$^m$-model of $K$]\label{cwm-model} 
Let 
$K=\langle  {\cal T}_{strict},$ $ {\cal T}_{C_1}, \ldots,$ $ {\cal T}_{C_k}, {\cal A}  \rangle$ be a weighted $\el$ knowledge base over  ${\cal C}$,  
and 
$I=\langle \Delta, \cdot^I \rangle$ an $\el$ interpretation for $K$.
A {\em concept-wise multipreference model} {\em (cw$^m$-model)}  of $K$ is  a cw$^m$-interpretation ${\emme}=\langle \Delta,<_{C_1}, \ldots, <_{C_k}, <, \cdot^I \rangle$ 
such that: 
$\emme$  satisfies  all strict inclusions in $ {\cal T}_{strict}$ 
and assertions in ${\cal A}$, and
for all $j= 1, \ldots, k$,  $<_{C_j}$ is 
defined from  ${\cal T}_{C_j}$ and $I$, according to Definition \ref{total_preorder}. 
\end{definition}
As  preference relations $<_{C_j}$,  defined according to Definition \ref{total_preorder}, 
are irreflexive, transitive, modular, and  well-founded relations over $\Delta$ (for well-foundedness, remember that we are considering finite models),
the notion of cw$^m$-model $\emme$ introduced above is well-defined. 
By definition of cw$^m$-model, $\emme$ must satisfy all strict inclusions  
and assertions in $K$, but it is not required to satisfy all typicality inclusions $\tip(C_j) \sqsubseteq  D$ in $K$,
unlike other preferential typicality logics \cite{lpar2007}. This happens in a similar way in the multipreferential semantics for $\elpb$ ranked knowledge bases, and we refer to \cite{iclp2020} for an example showing that the cw$^m$-semantics is more liberal (in this respect) than  standard KLM preferential semantics. 

Observe that the notion of weight $W_i(x)$ of $x$ wrt $C_i$,  defined above as the sum of the weights of the satisfied defaults, is just a possible choice for the definition of the preference relations $<_i$ with respect to a concept $C_i$. 
 A different notion of preference $<_{C_i}$ has been defined from a ranked TBox ${\cal T}_{C_j}$ \cite{iclp2020}, 
by exploiting the (positive) integer ranks of the defeasible inclusions in ${\cal T}_{C_j}$ and the   (lexicographic) $\#$ strategy in 
the framework of basic preference descriptions  \cite{Brewka04}. The sum of the ranks has been first used in Kern-Isberner's c-interpretations  \cite{Kern-Isberner01,Kern-Isberner2014},
also considering 
the sum of the weights $\kappa_i^- \in \mathbb{N}$, representing penalty points for {\em falsified} conditionals. Here, we only sum the (positive or negative) weights of the satisfied defaults,  and we do it in a concept-wise manner. 

A notion of {\em concept-wise entailment}  (or cw$^m$-entailment) can be defined in a natural way to establish when a defeasible concept inclusion follows from a weighted knowledge base $K$. 
We can restrict our consideration 
 to (finite) {\em canonical } models, i.e., models which are large enough to contain 
all the relevant domain elements\footnote{This is a standard assumption in the semantic characterizations of rational closure for DLs, and in other semantic constructions. 
See  \cite{iclp2020} for the definition of canonical models for $\el$.}.


\begin{definition}[cw$^m$-entailment] \label{cwm-entailment}
An inclusion $\tip(C) \sqsubseteq D$ is cw$^m$-entailed  
from a weighted knowledge base $K$ 
if $\tip(C) \sqsubseteq D$ is satisfied in all canonical cw$^m$-models  
$\emme$ of $K$.
\end{definition}
As for ranked $\el$ knowledge bases \cite{iclp2020}, it can be proved that this notion of cw$^m$-entailment for weigthed KBs satisfies the KLM postulates of a preferential consequence relation \cite{iclp2020}. This is an easy consequence of the fact that the global preference relation $<$, which is used to evaluate  typicality,  is a strict partial order. 
As $<$ is not necessarily modular, cw$^m$-entailment does not necessarily satisfy rational monotonicity  \cite{whatdoes}.

The problem of deciding cw$^m$-entailment is $\Pi^p_2$-complete for ranked $\elpb$ knowledge bases \cite{iclp2020};
cw$^m$-entailment can be proven as well to be in $\Pi^p_2$ for weighted knowledge bases, 
based on a similar reformulation 
of cw$^m$-entailment as a problem of computing preferred answer sets. The proof of the result is similar to the proof of Proposition 7 in the online Appendix  of \cite{iclp2020},
apart from minor differences due to the different notion of preference $<_{C_i}$ used here with respect to the one for ranked knowledge bases. 

\section{Weighted Tboxes and multipreference fuzzy interpretations} \label{sec:fuzzyWeightedKB}

In this section, we move to consider fuzzy interpretations, 
and investigate the possibility of extending the previous multipreference semantic construction 
to  the fuzzy case.
%

  \begin{definition}[Fuzzy multipreference interpretation]\label{defi:fuzzy_multipreference}  
A {\em fuzzy multipreference interpretation} (or {\em fm-interpretation}) is a tuple
$\emme= \langle \Delta, <_{C_1}, \ldots, <_{C_k}, \cdot^I \rangle$, where:

(a) $(\Delta,\cdot^I)$ is a fuzzy interpretation;
 
(b) the $<_{C_i}$ are irreflexive, transitive, well-founded and modular relations over $\Delta$;


\end{definition}

%
%

%


Let $K$ be a weighted knowledge base $\langle  {\cal T}_{strict}, {\cal T}_{C_1}, \ldots, {\cal T}_{C_k}, {\cal A}  \rangle$, where  each axiom in ${\cal T}_{strict}$ has the form $\la \alpha \geq 1\ra$, and
${\cal T}_{C_i}=\{(d^i_h,w^i_h)\}$ is a set of typicality inclusions $d^i_h= \tip(C_i) \sqsubseteq D_{i,h}$ with weight $w^i_h$.


Given a fuzzy interpretation $I=\langle \Delta, \cdot^I \rangle$, satisfying all the strict inclusions  in ${\cal T}_{strict}$ and all assertions in ${\cal A}$,
we aim at constructing a concept-wise multipreference interpretation from $I$, 
by defining a preference relation $<_{C_j}$ on $\Delta$ for each $C_i \in {\cal C}$, based on a closure construction similar to the one developed in Section  \ref{sec:integer_weights}.
The definition of $W_i(x)$ in (\ref{weight}) can be reformulated as follows: 
 \begin{align}\label{weight_fuzzy}
	W_i(x)  & = \left\{\begin{array}{ll}
						 \sum_{h} w_h^i  \; D_{i,h}^I(x) & \mbox{ \ \ \ \  if } C_i^I(x)>0 \\
						- \infty &  \mbox{ \ \ \ \  otherwise }  
					\end{array}\right.
\end{align} 
by regarding the interpretation $D_{i,h}^I$ of concept $D_{i,h}$ as a two valued function from $\Delta$ to $\{0,1\}$ (rather than a subset of $\Delta$). And similarly for $C_i^I(x)$.
Definition (\ref{weight_fuzzy}) can be taken as the definition of the weight function $W_i(x)$ when $I$ is a fuzzy interpretation. 
Simply, in the fuzzy case,
for each default $d^i_h= \tip(C_i) \sqsubseteq D_{i,h}$, $D_{i,h}^I(x)$ is a value in $[0,1]$. 
In the sum, the value $D_{i,h}^I(x)$ of the membership of $x$ in $D_{i,h}$ is weighted by $ w_h^i $.
For inclusions $\tip(C_i) \sqsubseteq D_{i,h}$ with a positive weight, the higher is the degree of truth of the membership of $x$ in $D_{i,h}$, the higher is the weight $W_i(x)$. 
For inclusions with a negative weight, the lower is the degree of truth of the membership of $x$ in $D_{i,h}$, the higher is the weight $ W_i(x)$.

From this notion of weight of a domain element $x$ wrt a concept $C_i \in {\cal C}$, 
the {\em preference relation $\leq_{C_i}$  associated with ${\cal T}_{C_j}$ in a fuzzy interpretation $I$} can be defined as in Section \ref{sec:integer_weights}: 
\begin{align}\label{pref_rel_fuzzy}
x  \leq_{C_i}  y & \mbox{  \ \ iff \ \ } W_i(x) \geq W_i(y)  
\end{align}
%
\noindent 
A notion of fuzzy multipreference model of a weighted KB can then be defined. 

\begin{definition}[fuzzy multipreference model of $K$]\label{fuzzy_cwm-model} 
Let 
$K=\langle  {\cal T}_{strict},$ $ {\cal T}_{C_1}, \ldots,$ $ {\cal T}_{C_k}, {\cal A}  \rangle$ be a weighted $\el$ knowledge base over  ${\cal C}$.   
%
A {\em  fuzzy multipreference model} (or {\em fm-model})  of $K$ is  an fm-interpretation ${\emme}=\langle \Delta,<_{C_1}, \ldots, <_{C_k}, \cdot^I \rangle$ 
such that: 
the fuzzy interpretation $I=(\Delta,  \cdot^I)$  satisfies  all strict inclusions in $ {\cal T}_{strict}$ and assertions in ${\cal A}$
and, for all $j= 1, \ldots, k$,  $<_{C_j}$ is 
defined from  ${\cal T}_{C_j}$ and $I$, according to condition (\ref{pref_rel_fuzzy}). 
\end{definition}
Note that, as  we restrict to witnessed fuzzy interpretations $I$, for $S \neq \emptyset$, $\mathit{inf_{x \in S} C_i^I}$ is attained at some point in $\Delta$. 
Hence, $min_{<_{C_i}}(S) \neq \emptyset$, i.e., $<_{C_i}$ is well-founded.

The preference relation $<_{C_i}$ establishes how typical a domain element $x$ is 
wrt $C_i$. We can then require that the degree of membership in $C_i$ (given by the fuzzy interpretation $I$) and the relative typicality wrt $C_i$ (given by the preference relations $<_{C_i}$) are related, and agree with each other.

\begin{definition}[Coherent fm-models] \label{def_coherence} 
The preference relation $<_{C_i}$ {\em agrees with the fuzzy interpretation $I=\langle \Delta, \cdot^I \rangle$} if, for all  $x,y \in \Delta$: $x  <_{C_i}  y$  iff $C_i^I(x) > C_i^I(y)$. 
%

\noindent
An fm-model ${\emme}=\langle \Delta,<_{C_1}, \ldots, <_{C_k}, \cdot^I \rangle$ of $K$  is a {\em coherent} fm-model (or {\em cf$^m$-model}) of $K$ if, for all  
$C_i \in{\cal C}$,  preference relation $<_{C_i}$ agrees with the fuzzy interpretation $I$.
\end{definition}
In a cf$^m$-model, the preference relation $<_{C_i}$ over $\Delta$  constructed from ${\cal T}_{C_i}$  coincides with the preference relation induced by $C_i^I$.
As the interpretation function $\cdot^I$ extends to any concept $C$, for cf$^m$-models we do not need to introduce a global preference relation $<$, defined by combining the $<_{C_i}$.  To define the interpretation of typicality concepts $\tip(C)$ in a cf$^m$-model, we follow a different route and we let, for all concepts $C$,
 $$(\tip(C))^I= min_{<_C} (C^I),$$ 
 where $<_C$ is the preference relation over $\Delta$ induced by $C^I$, i.e., for all $x,y \in \Delta$: $x  <_{C}  y$  iff $C^I(x) > C^I(y)$.
Note that $\tip(C)$ is a two valued concept, i.e., $(\tip(C))^I(x) \in \{0,1\}$, and  satisfiability in a cf$^m$-model is now extended to fuzzy inclusion axioms  involving typicality concepts, such as $\langle \tip(C) \sqsubseteq D \geq \alpha \rangle$.

A notion of {\em cf$^m$-entailment} from a weighted knowledge base $K$ can be defined in the obvious way: a fuzzy axiom $E$   {is cf$^m$-entailed by a fuzzy knowledge base $K$} if, for all cf$^m$-models $\emme$ of $K$,
$\emme$ satisfies $E$.

\section{Preferential and fuzzy interpretations of multilayer perceptrons} \label{sec:multiulayer_perceptron}

In this section, we first shortly introduce multilayer perceptrons. Then we develop a preferential interpretation of a neural network after training, and a fuzzy preferential interpretation.


Let us first recall from \cite{Haykin99} the model of a {\em neuron} as an information-processing unit in an (artificial) neural network. The basic elements are the following:
\begin{itemize}
\item 
a set of {\em synapses} or {\em connecting links}, each one characterized by a {\em weight}. 
We let $x_j$ be the signal at the input of synapse $j$ connected to neuron $k$, and $w_{kj}$ the related synaptic weight;
\item
the adder for summing the input signals to the neuron, weighted by the respective synapses weights: $\sum^n_{j=1} w_{kj} x_j$;
\item
an {\em activation function} for limiting the amplitude of the output of the neuron (typically, to the interval $[0,1]$ or $[-1,+1]$).
\end{itemize}
The sigmoid, threshold and hyperbolic-tangent functions are examples of activation functions.
A neuron $k$ can be described by the following pair of equations: $u_k= \sum^n_{j=1} w_{kj} x_j $, and $y_k=\varphi(u_k + b_k)$,
where  $x_1, \ldots, x_n$ are the input signals and $w_{k1}, \ldots,$ $ w_{kn} $ are the weights of neuron $k$; 
$b_k$ is the bias, $\varphi$ the activation function, and $y_k$ is the output signal of neuron $k$.
By adding a new synapse with input $x_0=+1$ and synaptic weight $w_{k0}=b_k$, one can write: 
\begin{equation} \label{eq:neuron2}
u_k= \sum^n_{j=0} w_{kj} x_j \mbox{ \ \ \ \ \ \ \ \ \ \ \ \ \ \ } y_k=\varphi(u_k),
\end{equation}
where $u_k$ is called the {\em induced local field} of the neuron.
The neuron can be represented as a directed graph, where the input signals $x_1, \ldots, x_n$ and the output signal $y_k$ of neuron $k$ are nodes of the graph.
An edge from $x_j$ to $y_k$, labelled $w_{kj}$, means that  $x_j$ is an input signal of neuron $k$ with synaptic weight $w_{kj}$.  

A neural network can then be seen as ``a directed graph consisting of nodes with interconnecting synaptic and activation links"  \cite{Haykin99}:
nodes in the graph are the neurons (the processing units) 
and the weight $w_{ij}$ on the edge from node $j$ to node $i$ represents ``the strength of the connection [..] by which unit $j$ transmits information to unit $i$" \cite{Plunkett98}.
Source nodes (i.e., nodes without incoming edges) produce the input signals to the graph. 
Neural network models are classified by their synaptic connection topology. In a {\em feedforward} network the {architectural graph} is acyclic, while in a {\em recurrent} network it contains cycles. In a feedforward network neurons are organized in layers. In a {\em single-layer} network there is an input-layer of source nodes and an output-layer of computation nodes. In a {\em multilayer feedforward} network there is one or more hidden layer, whose computation nodes are called {\em hidden neurons} (or hidden units).
The source nodes in the input-layer supply the activation pattern ({\em input vector}) providing the input signals for the first layer computation units.
In turn, the output signals of first layer computation units provide the input signals for the second layer computation units, and so on, up to the final output layer of the network, which provides the overall response of the network to the activation pattern.
In a recurrent network at least one feedback 
exists, so that ``the output of a node in the system influences in part the input applied to that particular element"  \cite{Haykin99}.   
In the following, we do not put restrictions on the topology the network.

``A major task for a neural network is to learn a model of the world"  \cite{Haykin99}. In supervised learning, a set of input/output pairs, input signals and corresponding desired response, referred as training data, or training sample, is used to train the network to learn. In particular, the network learns by changing the synaptic weights, through the exposition to the training samples. After the training phase, in the generalization phase, the network is tested with data not seen before. ``Thus the neural network not only provides the implicit model of the environment in which it is embedded, but also performs the information-processing function of interest"  \cite{Haykin99}.
In the next section, we try to make this model explicit as a multipreference model.


\subsection{A multipreference interpretation of multilayer perceptrons}  \label{sec:sem_for_NN}

Assume that the network ${\cal N}$ has been trained and the synaptic weights $w_{kj}$ have been learned.
We associate a concept name $C_i \in N_C$ to any unit 
$i$ in ${\cal N}$ (including input units and hidden units) 
and construct a multi-preference interpretation over a (finite) {\em domain $\Delta$} of input stimuli,
the input vectors considered so far, for training and generalization.
In case the network is not feedforward, we assume that, for each input vector $v$ in $\Delta$, the network reaches a stationary state  \cite{Haykin99}, in which $y_k(v)$ is the activity level of unit $k$. 

Let ${\cal C}= \{ C_1, \ldots, C_n\}$ be a subset of $N_C$, the set of concepts $C_i$ for a distinguished subset of units $i$, 
the units we are focusing on (for instance, ${\cal C}$ might be associated to the set of output units, or to all units). 
We can associate to ${\cal N}$ and $\Delta$ a (two-valued) concept-wise multipreference interpretation over the boolean fragment of $\alc$ (with no roles and no individual names), based on Definition \ref{defi:cw-multipref}, as follows:

\begin{definition}
The  {\em cw$^m$interpretation $\emme_{\enne}^\Delta=\langle \Delta,<_{C_1}, \ldots, <_{C_n}, <, \cdot^I \rangle$ over $\Delta$
for network ${\cal N}$}  wrt ${\cal C}$ is a cw$^m$-interpretation where: \\
$-$ the interpretation function $\cdot^I$ is defined for named concepts $C_k \in N_C$ as: $x \in C_k^I$ if  $y_k(x) \neq 0$, and $x \not \in C_k^I$ if  $y_k(x) = 0$.\\
$-$  for $C_k \in {\cal C}$,  relation $<_{C_k}$ is defined for $x,x' \in \Delta$ as:
 $x <_{C_k} x'$ iff $y_k(x) > y_k(x')$\footnote{$y_k(x)$ is the output signal of unit $k$ for input vectors $x$. Differently from condition (\ref{eq:neuron2}), here (and below) the dependency of the output $y_k$ of neuron $k$ on the input vector $x$ is made explicit.}.
\end{definition}
The relation $<_{C_k}$ is a strict partial order, and $\leq_{C_k}$ and $\sim_{C_k}$ are defined as usual. In particular,  $x \sim_{C_k} x'$ for $x,x' \not \in C_k^I$.
Clearly, the boundary between the domain elements which are in $C_k^I$ and those  which are not could be defined differently, 
e.g., by letting $x \in C_k^I$ if  $y_k(x) > 0.5$, and $x \not \in C_k^I$ if  $y_k(x) \leq 0.5$. This would require only a minor change in the definition of the $<_{C_k}$. 

This model provides a multipreference interpretation of the network $\enne$, 
based on the input stimuli considered in $\Delta$.
For instance, when the neural network is used for categorization and a single output neuron is associated to each category, each concept $C_h$ associated to an output unit $h$  corresponds to a learned category. If $C_h \in {\cal C}$, the preference relation $<_{C_h}$ determines the relative typicality of input stimuli wrt category $C_i$. This allows to verify typicality properties concerning categories,  such as $\tip(C_h) \sqsubseteq D$ (where $D$ is a boolean concept built from the named concepts in $N_C$), by {\em model checking} on the model $\emme_{\enne}^\Delta$. According to the semantics of typicality concepts, this would require to identify typical $C_h$-elements 
and checking whether they are instances of concept $D$. 
%
%
General typicality inclusion of the form $\tip(C) \sqsubseteq D$, with $C$ and $D$ boolean concepts, can as well be verified on the model $\emme_{\enne}^\Delta$. 
However, the identification of $<$-minimal $C$-elements requires computing, for all pairs of elements $x,y \in \Delta$, the relation $<$ and the relations $<_{C_i}$ for $C_i \in {\cal C}$.  This may be challenging as $\Delta$ can be large. 
 
Evaluating properties involving hidden units might be of interest, although their meaning is usually unknown.  
In the well known Hinton's family example \cite{Hinton1986}, one may want to verify whether, normally, given an old Person 1 and relationship Husband, Person 2 would also be old, i.e., $\tip(Old_1 \sqcap Husband) \sqsubseteq Old_2$ is satisfied. 
Here, concept $Old_1$ (resp., $Old_2$) is  associated to a (known, in this case) hidden unit for Person 1 (and Person 2), while Husband is associated to an  input unit.

\subsection{A fuzzy interpretation of multilayer perceptrons}  \label{sec:fuzzy_sem_for_NN}

The definition of a fuzzy model of a neural network ${\cal N}$, under the same assumptions as in Section  \ref{sec:sem_for_NN},  is straightforward.
Let $N_C$ 
be the set containing a concept name $C_i$ for each unit $i$ in $\enne$, including hidden units.
Let us restrict to the boolean fragment of $\alc$ with no individual names.
We define a {\em fuzzy interpretation} $I_{\enne}=\langle \Delta, \cdot^I \rangle$ for $\enne$ as follows:
\begin{itemize}
\item
$\Delta$ is a (finite) set of input stimuli;

\item
the interpretation function $\cdot^I$ is defined for named concepts $C_k \in N_C$ as: $C_k^I(x)= y_k(x)$, $ \forall x \in \Delta$;
where $y_k(x)$ is the output signal of neuron $k$, for input vector $x$.
\end{itemize}
%
The verification that a fuzzy axiom $\la C \sqsubseteq D \geq  \alpha \ra$ is satisfied in the model $I_{\enne}$, can be done based on satisfiability in fuzzy DLs, according to the choice of the t-norm and implication function. It requires $C_k^I(x)$ to be recorded for all $k=1,\ldots, n$ and $x \in \Delta$.
Of course, one could restrict $N_C$  to the concepts associated to input and output units in $\enne$, so to capture the input/output behavior of the network.

In the next section, starting from this fuzzy interpretation of a neural network ${\enne}$, we define a fuzzy multipreference interpretation ${\emme^{f,\Delta}_{\enne}}$, and prove that ${\emme^{f,\Delta}_{\enne}}$ is a coherent fm-model of the conditional knowledge base $K_{\enne}$  associated to  ${\enne}$, under some condition.

\subsection{Multilayer perceptrons as conditional knowledge bases}  \label{sec:NN&Conditionals} 


Let $N_C$ be as in Section  \ref{sec:fuzzy_sem_for_NN}, and
let ${\cal C}= \{ C_1, \ldots, C_n\}$ be a subset of $N_C$.
Given the {\em fuzzy interpretation} $I_{\enne}=\langle \Delta, \cdot^I \rangle$  as defined in Section  \ref{sec:fuzzy_sem_for_NN},
%
%
a fuzzy multipreference interpretation ${\emme^{f,\Delta}_{\enne}}=\langle \Delta,<_{C_1}, \ldots, <_{C_n}, \cdot^I \rangle$ over $\cal C$
can be defined 
 by letting  $<_{C_k}$ to be the preference relation induced by the interpretation $I_{\enne}$, as follows: for $x,x' \in \Delta$,
\begin{equation} \label{def:pref_M_N}
 x <_{C_k} x' \mbox{ iff } y_k(x) > y_k(x').
\end{equation}
Interpretation ${\emme^{f,\Delta}_{\enne}}$ makes the preference relations induced by $I_{\enne}$ explicit. 
We aim at proving that ${\emme^{f,\Delta}_{\enne}}$ is indeed a coherent fm-model of the neural network ${\enne}$. 
A weighted conditional knowledge base $K^{\enne}$ is associated to the neural network ${\enne}$ as follows.

For each unit $k$, we consider all the units $j_1, \ldots, j_m$ whose output signals are the input signals 
of unit $k$, with synaptic weights $w_{k,{j_1}}, \ldots, w_{k,{j_m}}$.  Let $C_k$ be the concept name associated to unit $k$ and $C_{j_1}, \ldots, C_{j_m}$ the concept names associated to units $j_1, \ldots, j_m$, respectively.
We define for each unit $k$ the following set ${\cal T}_{C_k}$ of typicality inclusions, with their associated weights:
\begin{quote}
$\tip(C_k) \sqsubseteq C_{j_1}$ with  $w_{k,{j_1}}$, 
$\ldots$ ,
$\tip(C_k) \sqsubseteq C_{j_m}$ with  $w_{k,{j_m}}$
\end{quote}
Given ${\cal C}$, the knowledge base  extracted from network ${\enne}$ is defined as the tuple: $K^{\enne} = \langle  {\cal T}_{strict},{\cal T}_{C_1}, \ldots,$ $ {\cal T}_{C_n}, {\cal A}  \rangle$, where $ {\cal T}_{strict}= {\cal A}=\emptyset$ and $K^{\enne} $ contains the set  ${\cal T}_{C_k}$ of weighted typicality inclusions associated to neuron $k$ (defined as above),  for each $C_k \in {\cal C}$.
$K^{\enne}$ is a weighted knowledge base over the set  of distinguished concepts ${\cal C}= \{ C_1, \ldots, C_n\}$.
For multilayer feedforward networks, $K^{\enne}$ correspond to an acyclic conditional knowledge base, 
and defines a (defeasible) subsumption hierarchy among concepts. 
It can be proved that:

\begin{proposition} \label{Prop}
${\emme^{f,\Delta}_{\enne}}$ is a cf$^m$-model of the  knowledge base $K^{\enne}$, provided the activation functions $\varphi$ of all neurons are monotonically increasing and have value in $(0,1]$.
\end{proposition}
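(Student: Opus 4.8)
The goal is to show that the fuzzy multipreference interpretation $\emme^{f,\Delta}_{\enne}$ constructed from the trained network $\enne$ satisfies the two requirements of a cf$^m$-model of $K^{\enne}$ (Definitions \ref{fuzzy_cwm-model} and \ref{def_coherence}): (i) for each distinguished concept $C_k \in {\cal C}$, the relation $<_{C_k}$ of $\emme^{f,\Delta}_{\enne}$ coincides with the preference relation constructed from ${\cal T}_{C_k}$ and the underlying fuzzy interpretation $I_{\enne}$ via condition (\ref{pref_rel_fuzzy}); and (ii) each $<_{C_k}$ agrees with $I_{\enne}$, i.e.\ $x <_{C_k} x'$ iff $C_k^I(x) > C_k^I(x')$. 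Since ${\cal T}_{strict} = {\cal A} = \emptyset$, the strict-inclusion and assertion conditions are vacuous, so these two points are all that must be verified. Point (ii) is immediate from the definitions: $C_k^I(x) = y_k(x)$ by the definition of $I_{\enne}$ in Section \ref{sec:fuzzy_sem_for_NN}, and $<_{C_k}$ was defined in (\ref{def:pref_M_N}) precisely by $x <_{C_k} x'$ iff $y_k(x) > y_k(x')$; the well-foundedness and modularity demanded of an fm-interpretation follow because $<_{C_k}$ is induced by a real-valued function on a finite domain.

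First I would unfold point (i). Fix a unit $k$ with incoming units $j_1,\dots,j_m$ and synaptic weights $w_{k,j_1},\dots,w_{k,j_m}$, so that ${\cal T}_{C_k} = \{(\tip(C_k) \sqsubseteq C_{j_\ell}, w_{k,j_\ell})\}_{\ell=1}^m$. Plugging this into the fuzzy weight function (\ref{weight_fuzzy}) gives, for any $x$ with $C_k^I(x) > 0$,
\begin{align*}
W_k(x) = \sum_{\ell=1}^m w_{k,j_\ell}\, C_{j_\ell}^I(x) = \sum_{\ell=1}^m w_{k,j_\ell}\, y_{j_\ell}(x),
\end{align*}
which is exactly the induced local field $u_k(x)$ of neuron $k$ on input $x$ (using the rewriting (\ref{eq:neuron2}), with the bias absorbed as an extra synapse if present). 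Hence the preference relation constructed from ${\cal T}_{C_k}$ orders elements $x$ (with $C_k^I(x)>0$) by the value of $u_k(x)$, while $<_{C_k}$ orders them by $y_k(x) = \varphi(u_k(x))$.

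The crux of the argument — and the place where the hypothesis on the activation functions is needed — is reconciling these two orderings. Since $\varphi$ is monotonically increasing, $\varphi$ preserves strict order, so $u_k(x) > u_k(x')$ iff $y_k(x) > y_k(x')$ for elements on which $u_k$ is defined; this matches the two relations on the set $\{x : C_k^I(x) > 0\}$. The remaining case is elements $x$ with $C_k^I(x) = 0$, i.e.\ $y_k(x) = 0$: for these $W_k(x) = -\infty$, so they sit strictly below every element of positive membership in the $<_{C_k}$ constructed from ${\cal T}_{C_k}$, and are mutually $\sim_{C_k}$-equivalent — and this is precisely the behaviour of the relation (\ref{def:pref_M_N}) as well, since $y_k(x) = y_k(x') = 0$ there and $y_k(x) > 0$ for all $x'$ with positive membership. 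Here is where the requirement that $\varphi$ has value in $(0,1]$ enters: it guarantees $y_k(x) = \varphi(u_k(x)) > 0$ always, so the ``otherwise'' branch of (\ref{weight_fuzzy}) never fires for named concepts $C_k$ that are outputs of a neuron, ensuring $C_k^I(x) > 0$ for all $x$ and keeping the two orderings in lockstep without boundary mismatches; conversely, dropping this condition would allow $y_k(x) = 0$ for some $x$, creating a discrepancy between ``$W_k(x) = -\infty$'' and a finite $W_k(x')$ whose ordering would no longer be faithfully reflected. I would organize the writeup as: (1) reduce to conditions (i) and (ii); (2) dispatch (ii) directly; (3) for (i), compute $W_k(x) = u_k(x)$; (4) invoke monotonicity of $\varphi$ to equate the orderings, handling the positive-membership case and the zero-membership case separately, pointing out where $(0,1]$ is used. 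The main obstacle is purely bookkeeping: making the coincidence of $<_{C_k}$ with the ${\cal T}_{C_k}$-constructed relation precise across the membership boundary, and stating cleanly why both hypotheses on $\varphi$ are exactly what is needed.
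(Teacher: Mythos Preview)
Your proposal is correct and follows essentially the same approach as the paper: both arguments use the $(0,1]$ hypothesis up front to guarantee $C_k^I(x)>0$ for every stimulus (so the $-\infty$ branch of (\ref{weight_fuzzy}) never fires), compute $W_k(x)$ as the induced local field $u_k(x)$, and then invoke strict monotonicity of $\varphi$ to show that ordering by $W_k$ coincides with ordering by $y_k$, with coherence read off directly from the definitions. Your treatment of the zero-membership boundary is more detailed than the paper's (which simply rules it out at the start), but the mathematical content is the same.
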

\begin{proof}
Let $\emme^{f,\Delta}_{\enne}=\langle \Delta,<_{C_1}, \ldots, <_{C_n}, <, \cdot^I \rangle$. 
Let us consider any neuron $k$, such that $C_k \in {\cal C}$.  

From the hypothesis, for any input stimulus $x \in \Delta$,  $y_k(x)  >0$ and, hence, $C_k(x)  >0$.
The weight  $W_k(x)$ of $x$ wrt $C_k$ is defined, according to equation (\ref{weight_fuzzy}),
as 
$$W_k(x)=  \sum_{h=1}^{m} w_{k,{j_h}} \; C_{j_h}^I(x)$$

\noindent
where $\tip(C_k) \sqsubseteq C_{j_1}$ 
$\ldots$
$\tip(C_k) \sqsubseteq C_{j_m}$ are the typicality inclusions in  ${\cal T}_{C_k}$ with weights $w_{k,{j_1}}, \ldots, w_{k,{j_m}}$.
Observe that,  for all $h=1,\ldots, m$, $C_{j_h}^I(x)= y_{j_h}$, for input $x$,  by construction of interpretation $\emme^{f,\Delta}_{\enne}$ (and of $\cdot^I$). Therefore:

$$W_k(x)=  \sum_{h=1}^{m} w_{k{j_h}}  y_{j_h}$$

\noindent
As $y_{j_1}, \ldots, y_{j_m}$ are the input signals 
of unit $k$, 
it holds that
$W_k(x)$ is the induced local field $u_k$ of unit $k$ (as in equation  (\ref{eq:neuron2})), for the input stimulus $x$.
As $u_k=W_k(x)$ than, for the given input stimulus $x$, the output  of neuron $k$ must be $y_k(x)=\varphi(u_k)=\varphi(W_k(x))$, where $\varphi$ is the activation function of unit $k$. As by construction of $\emme^{f,\Delta}_{\enne}$,  $C_{k}^I(x)= y_{k}(x)$ for all units $k$,  it holds that $C_{k}^I(x) =\varphi(W_k(x))$. 

To prove that $\emme^{f,\Delta}_{\enne}$ is a cf$^m$-model of $K^\enne$, we have to prove that $\emme^{f,\Delta}_{\enne}$ is a fuzzy multipreference model of $K_\enne$ and it is coherent.

We first prove that $\emme^{f,\Delta}_{\enne}$ is an fm-model of $K_\enne$.
As ${\cal T}_{Strict}$ and ${\cal A}$ are empty in $K^\enne$.
We only have to prove that, for all $C_k \in {\cal C}$, $<_{C_k}$ satisfies Condition (\ref{pref_rel_fuzzy}). We prove that,
\begin{center}
 $x <_{C_k} x'$ iff $W_k(x) > W_k(x')$,
\end{center}
from which (\ref{pref_rel_fuzzy}) follows.
For all $x, x' \in \Delta$, by construction of $\emme^{f,\Delta}_{\enne}$,
\begin{center}
 $x <_{C_k} x'$ iff $y_k(x) > y_k(x')$
\end{center}
Assume that $x <_{C_k} x'$. 
%
As $y_k(x)=\varphi(W_k(x))$ and $y_k(x')=\varphi(W_k(x'))$,  $\varphi(W_k(x)) > \varphi(W_k(x'))$ holds.
Then, it must be the case that $W_k(x) > W_k(x')$ (otherwise, by the assumption that $\varphi$ is monotone increasing, 
from $W_k(x) \leq W_k(x')$ it would follow that  $\varphi(W_k(x)) \leq \varphi(W_k(x'))$).

Conversely, assume that  $W_k(x) > W_k(x')$. As, from the hypothesis, $C_k(x') = y_k(x') >0$,  
both $W_k(x)$ and $W_k(x')$ are weighted sum of real numbers.
As $\varphi$ is monotonically increasing, $\varphi(W_k(x)) > \varphi(W_k(x'))$
and, hence, $y_k(x) > y_k(x')$, so that  $x <_{C_k} x'$ holds.  

It is easy to prove that $\emme^{f,\Delta}_{\enne}$ is coherent. 
By construction of $\emme^{f,\Delta}_{\enne}$, for each $C_k \in {\cal C}$, $<_{C_k}$ is defined by equation (\ref{def:pref_M_N}) as: 
 \begin{align*}
 x <_{C_k} x' \mbox{ iff } y_k(x) > y_k(x')
\end{align*}
As $C_{k}^I(x)= y_{k}(x)$ and $C_{k}^I(x')= y_{k}(x')$ (again by construction of  $\emme^{f,\Delta}_{\enne}$),
\begin{align*}
 x <_{C_k} x' \mbox{ iff } C^I_k(x) > C^I_k(x')
\end{align*}
i.e., $\emme^{f,\Delta}_{\enne}$ is a coherent fm-model of $K^\enne$.
\qed
\end{proof}
%

\noindent
Under the given conditions, that hold, for instance, for the sigmoid activation function,
for any choice of ${\cal C} \subseteq N_C$  and for any choice of the domain $\Delta$ of input stimuli (all leading to a stationary state of $\enne$), the fm-interpretation ${\emme^{f,\Delta}_{\enne}}$ 
is a coherent fuzzy multipreference model of the defeasible knowledge base $K^{\enne}$. 
 The knowledge base $K^{\enne}$ does not provide a logical characterization of the neural network $\enne$, as the requirement of coherence does not determine  the activation functions of  neurons.
For this reason, the knowledge base $K^{\enne}$ 
captures the behavior of  
all the networks ${\enne'}$, 
obtained from $\enne$ by replacing  the activation function of the units in $\enne$ with other monotonically increasing activation functions  with values in $(0,1]$, in all possible ways (but retaining the same synaptic weights as in $\enne$).
That is, an interpretation ${\emme^{f,\Delta}_{\enne'}}$,  constructed from a network $\enne'$ and any $\Delta$ as above, 
is as well a cf$^m$-model of $K^{\enne}$. 
This means that 
the logical formulas cf$^m$-entailed from $K^{\enne}$ hold in all the models ${\emme^{f,\Delta}_{\enne'}}$ built from $\enne'$. They are properties of $\enne'$, as well as of network $\enne$. cf$^m$-entailment from $K^{\enne}$ is sound for $\enne$ and for each $\enne'$ as above. 



\section{Weak coherence and monotonically non-decreasing activation functions}

In this section we aim at weakening the coherence requirement for a fuzzy multipreference interpretation in order to capture a wider class of monotone non-decreasing activation functions.

Let us define a notion of weak coherence of a fuzzy multipreference model $\emme$ of a knowledge base $K$ with respect to a fuzzy interpretation $I=\langle \Delta, \cdot^I \rangle$.

\begin{definition}[Weakly coherent fm-models] \label{def_weak_coherence} 
The preference relation $<_{C_i}$ {\em weakly agrees with the fuzzy interpretation $I=\langle \Delta, \cdot^I \rangle$} if, for all  $x,y \in \Delta$: 
\begin{align}\label{weak_agreement}
 C_i^I(x) > C_i^I(y) \; \Ri \; x & <_{C_i}  y  
 \end{align}

\noindent
An fm-model ${\emme}=\langle \Delta,<_{C_1}, \ldots, <_{C_k}, \cdot^I \rangle$ of $K$  is a {\em weakly coherent} fm-model (or {\em cf$^m$-model}) of $K$ if, for all  
$C_i \in{\cal C}$,  preference relation $<_{C_i}$ weakly agrees with the fuzzy interpretation $I$.
\end{definition}
Weak coherence only requires that the preference relation $<_{C_i}$ {\em respects} the preference ordering induced by the fuzzy interpretation $I$, without requiring that they coincide. $<_{C_i}$ can be finer but cannot be coarser than the preference ordering induced by the fuzzy interpretation $I$.

Let $\enne$ be network such that the activation functions $\phi$ of all neurons in $\enne$ are monotone non-decreasing with a value in $[0,1]$. 
Let $N_C$ be as in Section  \ref{sec:fuzzy_sem_for_NN}, and
let ${\cal C}= \{ C_1, \ldots, C_n\}$ be a subset of $N_C$.
Given the {\em fuzzy interpretation} $I_{\enne}=\langle \Delta, \cdot^I \rangle$  as defined in Section  \ref{sec:fuzzy_sem_for_NN},
where $C_k^I(x)=y_k(x)$ for all $x \in \Delta$,
%
%
a fuzzy multipreference interpretation ${\emme^{*,\Delta}_{\enne}}=\langle \Delta,<_{C_1}, \ldots, <_{C_n}, \cdot^I \rangle$ over $\cal C$
can be defined 
 by letting  $\leq_{C_k}$ be the preference relation defined as follows: for $x,x' \in \Delta$,
\begin{equation} \label{def:pref_M_N}
 x \leq_{C_k} x' \mbox{ iff } W_k(x) \geq W_k(x')
\end{equation}
and $<_{C_k}$ the associated strict partial order.
By construction, the model ${\emme^{*,\Delta}_{\enne}}$ satisfies Condition (\ref{pref_rel_fuzzy}), and is an fm-model of $K_\enne$.

It is easy to see that this model is weakly coherent, i.e., for all $x,y \in \Delta$,
 $ C_i^I(x) > C_i^I(y) \Ri x  <_{C_i}  y$.
 
From the equivalence $C_i^I(x) = y_i(x)= \varphi(W_i(x))$, from $ C_i^I(x) > C_i^I(y)$, it follows that   $\varphi(W_k(x)) > \varphi(W_k(y))$.

As $\varphi$ is a monotone non-decreasing activation function, then $W_k(x) > W_k(y)$ must hold.
By construction of ${\emme^{*,\Delta}_{\enne}}$, $x<_{C_i} y$. The next proposition follows.

\begin{proposition} \label{Prop1}
${\emme^{f,\Delta}_{\enne}}$ is a cf$^m$-model of the  knowledge base $K^{\enne}$, provided the activation functions $\varphi$ of all neurons are monotone non-decreasing.
\end{proposition}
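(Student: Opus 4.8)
The plan is to mirror the proof of Proposition~\ref{Prop} almost verbatim, working with the interpretation ${\emme^{*,\Delta}_{\enne}}$ whose preference relations are induced by the weight functions $W_k$ through~(\ref{def:pref_M_N}) (this is the natural model for the weakened notion of Definition~\ref{def_weak_coherence}), and relaxing only the final step in which monotonicity of $\varphi$ was used in both directions. Two things are needed: that ${\emme^{*,\Delta}_{\enne}}$ is an fm-model of $K^{\enne}$, and that it is weakly coherent in the sense of Definition~\ref{def_weak_coherence}. The first is essentially free: since ${\cal T}_{strict}$ and ${\cal A}$ are empty in $K^{\enne}$, being an fm-model reduces to each $<_{C_k}$ being the strict order of a total preorder satisfying Condition~(\ref{pref_rel_fuzzy}), which holds by construction~(\ref{def:pref_M_N}); irreflexivity, transitivity and modularity follow because $\leq_{C_k}$ is a total preorder, and well-foundedness because $\Delta$ is finite (equivalently, because we restrict to witnessed interpretations).

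Next I would re-establish the key identity from the proof of Proposition~\ref{Prop}. For any neuron $k$ the inclusions of ${\cal T}_{C_k}$ are $\tip(C_k)\sqsubseteq C_{j_h}$ with weights $w_{k,j_h}$, where $j_1,\dots,j_m$ are the units feeding $k$, so for any input stimulus $x$ with $C_k^I(x)>0$ equation~(\ref{weight_fuzzy}) gives $W_k(x)=\sum_{h=1}^m w_{k,j_h}\,C_{j_h}^I(x)=\sum_{h=1}^m w_{k,j_h}\,y_{j_h}(x)$, which is exactly the induced local field $u_k$ of~(\ref{eq:neuron2}) for input $x$ (at a stationary state, when $\enne$ is not feedforward). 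Hence $C_k^I(x)=y_k(x)=\varphi(u_k)=\varphi(W_k(x))$ whenever $C_k^I(x)>0$.

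Then I would verify weak coherence. Fix $C_i\in{\cal C}$ and $x,y\in\Delta$ with $C_i^I(x)>C_i^I(y)$; the goal is $x<_{C_i}y$. Since $C_i^I(x)>0$, by the identity above $W_i(x)$ is a (finite) real and $C_i^I(x)=\varphi(W_i(x))$. If $C_i^I(y)=0$, then $W_i(y)=-\infty<W_i(x)$ by~(\ref{weight_fuzzy}), so $x<_{C_i}y$ by~(\ref{def:pref_M_N}). Otherwise $C_i^I(y)>0$, so also $C_i^I(y)=\varphi(W_i(y))$, and $\varphi(W_i(x))>\varphi(W_i(y))$; were $W_i(x)\le W_i(y)$, monotone non-decreasingness of $\varphi$ would give $\varphi(W_i(x))\le\varphi(W_i(y))$, a contradiction, so $W_i(x)>W_i(y)$ and again $x<_{C_i}y$ by~(\ref{def:pref_M_N}). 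This establishes~(\ref{weak_agreement}) for every $C_i\in{\cal C}$, hence ${\emme^{*,\Delta}_{\enne}}$ is weakly coherent, and together with the first paragraph it is a weakly coherent fm-model of $K^{\enne}$.

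The substantive point at which this argument diverges from Proposition~\ref{Prop} — and the only real difficulty — is that the reverse implication is now genuinely lost: where $\varphi$ is flat, two stimuli with $W_i(x)\neq W_i(y)$ can still have $C_i^I(x)=C_i^I(y)$, so $x<_{C_i}y$ need not imply $C_i^I(x)>C_i^I(y)$, and full coherence in the sense of Definition~\ref{def_coherence} cannot be recovered; this is precisely why the weaker notion was introduced, and I would emphasise that the proof delivers only the implication~(\ref{weak_agreement}). The sole bookkeeping subtlety is the degenerate case $C_i^I(y)=0$, in which $W_i$ is set to $-\infty$ rather than to a local field; it causes no harm because $-\infty$ sits below every real value, but it should be handled explicitly as above rather than glossed over.
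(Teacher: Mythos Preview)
Your proposal is correct and follows essentially the same approach as the paper's proof: show that ${\emme^{*,\Delta}_{\enne}}$ is an fm-model of $K^{\enne}$ by construction (Condition~(\ref{pref_rel_fuzzy}) holds by definition of $\leq_{C_k}$), then derive weak coherence from $C_i^I(x)=\varphi(W_i(x))$ via the contrapositive of monotone non-decreasingness. Your treatment is in fact slightly more careful than the paper's, which tacitly assumes $C_i^I(x)=\varphi(W_i(x))$ without separating out the degenerate case $C_i^I(y)=0$ where $W_i(y)=-\infty$; your explicit handling of that case is a genuine improvement in rigor.
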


%
%
%

\section{Towards a probabilistic account in probabilistic DLs}

In the previous section,  for an input vector $x \in \Delta$ and a unit $i$, we have interpreted  $C_i(x)$ as the degree of membership of $x$ in the concept $C_i$ in a fuzzy DL interpretation $I_\enne$. 
%
%
In this section, we aim at  discussing whether a probabilistic DL interpretation for of the neural network $\enne$ can be defined starting form the fuzzy DL interpretation $I_\enne$ introduced in Section \ref{sec:fuzzy_sem_for_NN}. 
Among the probabilistic extensions of description logics \cite{Lukasiewicz08,LutzS_KR10,LutzS_JAIR17,Penaloza2017,Wilhelm2019,BaaderEKW19}, we will consider those based on the statistical approach as well as those based on the subjective approach. Both approaches have been considered for DLs,  stemming from 
Halpern's Type 1 and Type 2 probabilistic FOL \cite{Halpern1990} for formalizing statistical and subjective probabilities.  

We follow the proposal by Zadeh, who showed that 
"the notions of an event and its probability can be extended in a natural fashion to fuzzy events" \cite{Zadeh1968}.
Given a probability space $(\mathbb{R}^n, {\cal A}, P)$, in which ${\cal A}$ is the $\sigma$-field of Borel sets in $\mathbb{R}^n$ and $P$ is a probability measure over $\mathbb{R}^n$,
Zadeh defines a {\em fuzzy event} in $\mathbb{R}^n$  as a fuzzy set in $\mathbb{R}^n$ whose membership function $\mu_A$ (with $\mu_A: \mathbb{R}^n \ri [0,1]$) is Borel measurable. He defines the {\em probability of a fuzzy event} $A$ by the Lebesgue-Stieltjes integral:

\begin{align*}
P(A)= & \int_{\mathbb{R}^n} \mu_A(x)  dP\\
= & \; E(\mu_A)
\end{align*}
where the probability of a fuzzy event is the expectation of its membership function.
Zadeh proves that the set of fuzzy events forms a $\sigma$-field with respect to the operations of complement, union and intersection in (Zadeh's) fuzzy logic.

Let us restrict to Zadeh's fuzzy logic. We adopt Zadeh's notion of probability of a fuzzy event to build a probabilistic $\alc$ interpretation starting from the fuzzy $\alc$ interpretation $I_\enne$.
As a basis for the definition of a probabilistic $\alc$ interpretation, we exploit the approach described by Lutz and Schr\"oder in the appendix of their work on the Prob-$\alc$ family of probabilistic DLs \cite{LutzS_KR10}. The appendix includes the definition of a probabilistic description logic Prob1-$\alc$ based on Halpern's Type 1 logics for statistical probabilities in FOL. The description logic Prob1-$\alc$ is intended to capture statistical probabilities in a DL using probability distributions on the domain $\Delta$. 
%
%
Lutz and Schr\"oder show that  
Prob1-$\alc$ is of very limited expressive power. 
In the following, we exploit their simple construction for defining Type 1 probabilistic $\alc$ interpretations,
but starting from fuzzy $\alc$ interpretations rather than from two-valued ones.  

As for the interpretation $I_{\enne}=\langle \Delta, \cdot^I \rangle$ constructed from the neural network $\enne$, in the following we will consider fuzzy interpretations over a finite domain $\Delta$ (a finite set of input stimuli), 
in which each concept $C$ is interpreted as a fuzzy set with  membership function $C^I: \Delta \ri [0,1]$. 

We introduce a logic ProbF-$\alc$, whose concepts are defined as in $\alc$. 
A TBox in ProbF-$\alc$, includes fuzzy concept inclusions as well as 
{\em probabilistic conditional constraints} $(C | D)[l,u]$, where $C$ and $D$ are concepts and $l,u$ are reals in $[0,1]$, meaning that  the probability of $C$ given $D$ lies between $l$ and $u$. Such conditional constraints are similar to those considered for the two-valued case in \cite{LukasiewiczStraccia09,Penaloza2017,Wilhelm2019}), while here concepts $C$ and $D$ are interpreted as being fuzzy.
As an example, the conditional constraint 
$(Quiet | Diligent )[0.4,0.8]$ has the intended meaning that the probability of an individual  being $Quiet$ given that he/she is $Diligent$ lies between $0.4$ and $0.8$, where $Quiet$ and $ Diligent$ have a fuzzy interpretation.

A {\em fuzzy-probabilistic $\alc$ interpretation} over a finite domain $\Delta$ is a structure 
$${\cal I}=\langle \Delta, \cdot^I , \mu \rangle$$
where
$\langle \Delta, \cdot^I \rangle$ is a fuzzy $\alc$ interpretation, as in Section \ref{sec:fuzzyDL}, and $\mu$ is a discrete probability distribution over $\Delta$.
In a  fuzzy-probabilistic interpretation ${\cal I}=\langle \Delta, \cdot^I , \mu \rangle$, the interpretation of concepts is defined as in the fuzzy interpretation $\langle \Delta, \cdot^I \rangle$  (see Section \ref{sec:fuzzyDL}) and does not depend on $\mu$. 
For each $\alc$ concept $C$, we let 
$\mu(C^I)$ denote 
$$\sum_{d \in \Delta} C^I(d) \; \mu(d)$$
where $C^I(d)$ is the degree of membership of $d$ in $C$. This is the point where we follow the proposal by Zadeh for defining the probability of a fuzzy event and diverge from Prob1-$\alc$ semantics 
in which $\mu(C^I)$ denotes  $\sum_{d \in C^I} \mu(d)$, where $C^I$ is crisp.

A Tbox is satisfied in $\cal I$ if all its axioms are satisfied in $\cal I$.
The satisfiability of fuzzy concept inclusions 
is defined as in the fuzzy interpretation  $\langle \Delta, \cdot^I \rangle$, i.e., $C \sqsubseteq D \; \theta \;  n$ is satisfied in ${\cal I}$ if  
$(C \sqsubseteq D)^I \; \theta \; n$.
The satisfiability of a conditional constraint is defined as follows:  $(C | D)[l,u]$ is satisfied in ${\cal I}$ if $\mu((C \sqcap D)^I) / \mu(D^I) \in [l,u]$. \normalcolor


Note that, by definition, 
\begin{equation}
\label{relative-distance}
\frac{ \mu((C \sqcap D)^I) }{ \mu(D^I)} =  \frac{ \sum_{d \in \Delta} (C \sqcap D)^I(d) \; \mu(d) } {\sum_{d \in \Delta}  D^I(d) \; \mu(d) }
\end{equation}
and, under the hypothesis that the probability distribution $\mu$ is uniform:
\begin{equation}
\frac{ \mu((C \sqcap D)^I) }{ \mu(D^I)} =  \frac{ \sum_{d \in \Delta}  (C \sqcap D)^I(d)} {\sum_{d \in \Delta}  D^I(d)} =\frac{ M ((C\sqcap D)^I)}{ M(D^I)}
\end{equation}
and the satisfiability of a conditional constraint $(C | D)[l,u]$ in an interpretation ${\cal I}$ 
can be evaluated by computing the ratio $M ((C\sqcap D)^I)/ M(D^I)$,
where 
$M(A)= \sum_{d \in \Delta}  A^I(d)$ is the size or cardinality of the fuzzy concept $A$ with membership function $A^I$. 

This is in agreement with Kosko's
account of "fuzziness in a probabilistic world" \cite{Kosko92}. He 
proved that ``the ratio form of the subsethood measure $S(A,B)$ has the same ratio form as the conditional probability $P(A|B)$", so that  ``subsethood reveals the connection between fuzziness and randomness" \cite{Kosko92}. In his Subsethood Theorem (see  \cite{Kosko92}, Chap. 7), he proved that $S(A,B)= M(A \cap B) / M(A)$, where $S(A,B)$ is the degree to which a fuzzy set $A$ belongs to a fuzzy set $B$ where, for a fuzzy set $A$ over a domain $X$ with membership function $m_A: X \ri [0,1]$,
$M(A)= \sum_{x \in X}  m_A(x)$ is the size of $A$. 


Let us consider the extension of $\alc$ with nominals, that is, with concepts of the form $\{a \}$ where $a$ is an individual name in $N_I$ representing a domain element.  In a standard DL interpretation a nominal $\{a \}$ is  interpreted  as a singleton,  i.e., $\{a \}^I =\{a^I\}$, 
and we will interpret it in the same way in a fuzzy interpretation $I=(\Delta, \cdot^I)$ (following Bobillo and Straccia in Fuzzy OWL 2 EL \cite{BobilloOWL2EL2018}). 
As $\{a \}^I$ is  crisp, its membership function  is the characteristic function:
 \begin{align*}\label{weight_fuzzy}
	\{a \}^I(d) & = \left\{\begin{array}{ll}
						 1 & \mbox{ \ \ \ \  if } a^I=d \\
						 0 &  \mbox{ \ \ \ \  otherwise }  
					\end{array}\right.
\end{align*} 
Let us assume that $N_I$ contains an individual name $x$ for each input stimulus  $x \in \Delta$, 
and let us consider, for some $x \in \Delta$, the conditional constraint $(C | \{x\} )[l,u]$ (we will simply write $(C | x)[l,u]$).

$(C | x)[l,u]$ is satisfied in ${\cal I}$ if $\mu((C \sqcap \{x\} )^I) / \mu(\{x\}^I) \in [l,u]$, where:
\begin{align*}
\frac{ \mu((C \sqcap \{x\})^I) }{ \mu(\{x\}^I)} & =  \frac{ \sum_{d \in \Delta} (C \sqcap \{x\})^I(d) \; \mu(d) } {\sum_{d \in \Delta}  \{x\}^I(d) \; \mu(d) } 
\end{align*}
As $(C \sqcap \{x\})^I(d)= 0$ for $d\neq x$, and that  $ \{x\}^I(d)=0$ for $d\neq x$:
\begin{align*}
\frac{ \mu((C \sqcap \{x\})^I) }{ \mu(\{x\}^I)} & =  \frac{  (C \sqcap \{x\})^I(x) \; \mu(x) } {  \{x\}^I(x) \; \mu(x) }
\end{align*}
Observing that $(C \sqcap \{x\})^I(x)= C^I(x)$ and  $ \{x\}^I(x)=1$:
\begin{align*}
\frac{ \mu((C \sqcap \{x\})^I) }{ \mu(\{x\}^I)} 
&=  \frac{  C^I(x) \; \mu(x) } { \mu(x) } 
=  C^I(x)
\end{align*}
Therefore, the conditional constraint $(C | x)[l,u]$  is satisfied in ${\cal I}$ when $C^I(x) \in [l,u]$. We are interpreting the membership value $C^I(x)$ in a fuzzy interpretation as the conditional probability of $C$ given the input stimulus $x$.  
This observation suggests an alternative way to develop a probabilistic $\alc$ interpretation from a fuzzy interpretation $I$ (like $I_\enne$), adopting a subjective view of probabilities in DLs.

The Prob-$\alc$ family \cite{LutzS_KR10,LutzS_JAIR17} is a family of probabilistic DLs which adopts a subjective view of probabilities as {\em degrees of belief}, and is concerned with probability distributions on a set of possible worlds (each one associated with a standard $\alc$ interpretation). Another approach is followed in the probabilistic description logic $\alc^{ME}$, introduced by Wilhelm  et al. \cite{Wilhelm2019,BaaderEKW19}, 
which combines subjective and statistical probabilities.
A probabilistic interpretation is defined a probability distribution $\mu: I_{K,\Delta} \ri [0,1]$ over the (finite) set $I_{K,\Delta}$ of all the standard $\alc$ interpretations with fixed finite domain $\Delta$.

Let us mention, as an example for subjective probability, the following one from \cite{BaaderEKW19}: ``a doctor may not know definitely that a patient has influenza, but only believe that this is the case with a certain probability".
The interpretation that the neural network gives to input stimuli can be regarded as well as subjective: the activation value of unit $k$ in the network $\enne$ for an input stimulus $x$ (e.g., an image) can be regarded as the membership degree of $x$ in concept $C_k$ but also as a degree of belief that $x$ is an instance of concept $C_k$.
%
%
In this view, for each input stimulus $x$ and each unit $k$ in $\enne$, the activation value $y_k$ of $k$ can than be regarded as the subjective probability that $x$ is an instance of $C_k$.
This can be expressed in the formalism by Wilhelm  et al. \cite{BaaderEKW19} as a  {\em probabilistic assertion} $P(C_k(x))[y_k]$, and in the formalism by Guti{\'{e}}rrez{-}Basulto et al. \cite{LutzS_JAIR17} as an Abox assertion $P_{=y_k}(C_k(x))$.
Given a probabilistic Abox ${\cal A}^\enne$ containing all the probabilistic assertions $P(C_k(x))[y_k]$, for all neurons $k$ and inputs $x$, the set of probabilistic interpretations 
satisfying ${\cal A}^\enne$ can than be regarded as the probabilistic interpretations of the neural network $\enne$.  
Whether such probabilistic DL interpretations are models of the neural network in a deeper sense, e.g., by considering $\enne$ as a set of probabilistic conditionals, has to be investigated and related to the major approaches from statistical relational AI (StarAI) \cite{DeRaedt2016}, 
which has strong relations to neural-symbolic computing \cite{SurveyDeRaedt2020}.

\section{Conclusions} 

In this paper, we have investigated the relationships between defeasible knowledge bases, under a fuzzy multipreference semantics, and multilayer neural networks. Given a network after training, we have seen that one can construct a (fuzzy) multipreference interpretation starting from a domain containing a set of input stimuli, and using the activity level of neurons for the stimuli.  
We have proven that such interpretations are models of the conditional knowledge base associated to the network, corresponding to a set of weighted defeasible inclusions in a simple DL. 

The correspondence between neural network models and fuzzy systems has been first investigated by Bart Kosko in his seminal work  \cite{Kosko92}.
In his view, ``at each instant the n-vector of neuronal outputs defines a fuzzy unit or a fit vector. Each fit value indicates the degree to which the neuron or element belongs to the n-dimentional fuzzy set." As a difference, our fuzzy interpretation of a multilayer perceptron regards each concept (representing a single neuron) as a fuzzy set. 
This is the usual way of viewing concepts in fuzzy DLs \cite{Straccia05,LukasiewiczStraccia08,BobilloStraccia16}, and we have interpreted concepts as fuzzy sets within a multipreference semantics based on a semantic closure construction, 
in the line of
Lehmann's semantics for lexicographic closure \cite{Lehmann95} and of Kern-Isberner's c-interpretations \cite{Kern-Isberner01,Kern-Isberner2014}. 

Much work has been devoted, in recent years, to the 
combination 
 of neural networks and symbolic reasoning, leading to the definition of new computational models \cite{GarcezLG2009,GarcezGori2019,GarcezGori2020,SurveyDeRaedt2020,SerafiniG16,Lukasiewicz2020}, 
 to extensions of logic languages 
with neural predicates \cite{DeepProbLog18,NeurASP2020} and vice-versa providing encoding of symbolic knowledge in neural architectures. 
Among the earliest 
systems combining logical reasoning and neural learning are the Knowledge-Based Artificial Neural Network (KBANN) \cite{KBANN94} and the Connectionist Inductive Learning and Logic Programming (CILP) \cite{CLIP99} 
systems. Penalty logic \cite{Pinkas95} , a non-monotonic reasoning formalism, was proposed as a mechanism to represent weighted formulas in energy-based connectionist (Hopfield) networks. 
%
Recent proposals for neural symbolic integration \cite{GarcezGori2020} include Logic Tensor networks \cite{SerafiniG16}, 
a generalization of the Neural Tensor Networks \cite{Socher2013}, and Graph Neural Networks \cite{Scarselli2009}. 
%
%
Here, rather than developing a new neural model to capture symbolic reasoning, 
we have provided a multipreference semantics for multilayer perceptrons as such, thus establishing a link between this neural network model and conditional reasoning.
%
This logical interpretation 
 may be of interest from the standpoint of explainable AI \cite{Adadi18,Guidotti2019,Arrieta2020}
and might be potentially exploited for an integrated use of neural network models and defeasible knowledge bases.  


Several issues may deserve further investigation as future work.
An open problem is whether the  the notion of cf$^m$-entailment is decidable  
(even for the small fragment of $\el$ without roles), under which choice of fuzzy logic combination functions, 
and whether decidable approximations can be defined. 
Another issue is whether the multipreference semantics can provide a semantic interpretation of other neural network models, besides
self-organising maps  \cite{kohonen2001}, whose multipreference semantics has been investigated in  \cite{CILC2020}.

\bibliography{biblioMultipreferenzeFuzzyProbNN2}

\begin{thebibliography}{10}

\bibitem{Adadi18}
A.~Adadi and M.~Berrada.
\newblock Peeking inside the black-box: {A} survey on explainable artificial
  intelligence {(XAI)}.
\newblock {\em {IEEE} Access}, 6:52138--52160, 2018.

\bibitem{Arrieta2020}
A.~Barredo Arrieta, N.~D{\'{\i}}az Rodr{\'{\i}}guez, J.~Del Ser, A.~Bennetot,
  S.~Tabik, A.~Barbado, S.~Garc{\'{\i}}a, S.~Gil{-}Lopez, D.~Molina,
  R.~Benjamins, R.~Chatila, and F.~Herrera.
\newblock Explainable artificial intelligence {(XAI):} concepts, taxonomies,
  opportunities and challenges toward responsible {AI}.
\newblock {\em Inf. Fusion}, 58:82--115, 2020.

\bibitem{rifel}
F.~Baader, S.~Brandt, and C.~Lutz.
\newblock {Pushing the} $\mathcal{EL}$ {envelope}.
\newblock In L.P. Kaelbling and A.~Saffiotti, editors, {\em Proceedings of the
  19th International Joint Conference on Artificial Intelligence (IJCAI 2005)},
  pages 364--369, Edinburgh, Scotland, UK, August 2005. Professional Book
  Center.

\bibitem{handbook}
F.~Baader, D.~Calvanese, D.L. McGuinness, D.~Nardi, and P.F. Patel-Schneider.
\newblock {\em {The Description Logic Handbook - Theory, Implementation, and
  Applications, 2nd edition}}.
\newblock Cambridge, 2007.

\bibitem{BaaderEKW19}
F.~Baader, A.~Ecke, G.~Kern{-}Isberner, and M.~Wilhelm.
\newblock The complexity of the consistency problem in the probabilistic
  description logic \emph{ALC} {\^{}}{\textbackslash}mathsf {ME}.
\newblock In {\em Frontiers of Combining Systems - 12th International
  Symposium, FroCoS 2019, London, UK, September 4-6, 2019, Proceedings}, volume
  11715 of {\em Lecture Notes in Computer Science}, pages 167--184. Springer,
  2019.

\bibitem{BenferhatIJCAI93}
S.~Benferhat, D.~Dubois, and H.~Prade.
\newblock Possibilistic logic: From nonmonotonicity to logic programming.
\newblock In {\em Symbolic and Quantitative Approaches to Reasoning and
  Uncertainty, European Conference, ECSQARU'93, Granada, Spain, November 8-10,
  1993, Proceedings}, pages 17--24, 1993.

\bibitem{BobilloStraccia16}
F.~Bobillo and U.~Straccia.
\newblock The fuzzy ontology reasoner fuzzydl.
\newblock {\em Knowl. Based Syst.}, 95:12--34, 2016.

\bibitem{BobilloStracciaEL18}
F.~Bobillo and U.~Straccia.
\newblock Reasoning within fuzzy {OWL} 2 {EL} revisited.
\newblock {\em Fuzzy Sets Syst.}, 351:1--40, 2018.

\bibitem{BobilloOWL2EL2018}
F.~Bobillo and U.~Straccia.
\newblock Reasoning within fuzzy {OWL} 2 {EL} revisited.
\newblock {\em Fuzzy Sets Syst.}, 351:1--40, 2018.

\bibitem{BobilloStraccia2018}
Fernando Bobillo and Umberto Straccia.
\newblock Reasoning within fuzzy {OWL} 2 {EL} revisited.
\newblock {\em Fuzzy Sets Syst.}, 351:1--40, 2018.

\bibitem{PenalosaARTINT15}
S.~Borgwardt, F.~Distel, and R.~Pe{\~{n}}aloza.
\newblock The limits of decidability in fuzzy description logics with general
  concept inclusions.
\newblock {\em Artif. Intell.}, 218:23--55, 2015.

\bibitem{Brewka04}
G.~Brewka.
\newblock A rank based description language for qualitative preferences.
\newblock In {\em Proceedings of the 16th Eureopean Conference on Artificial
  Intelligence, ECAI'2004, Valencia, Spain, August 22-27, 2004}, pages
  303--307, 2004.

\bibitem{sudafricaniKR}
K.~Britz, J.~Heidema, and T.~Meyer.
\newblock Semantic preferential subsumption.
\newblock In G.~Brewka and J.~Lang, editors, {\em Principles of Knowledge
  Representation and Reasoning: Proceedings of the 11th International
  Conference (KR 2008)}, pages 476--484, Sidney, Australia, September 2008.
  AAAI Press.

\bibitem{CasiniDL2013}
G.~Casini, T.~Meyer, I.~J. Varzinczak, , and K.~Moodley.
\newblock {Nonmonotonic Reasoning in Description Logics: Rational Closure for
  the ABox}.
\newblock In {\em 26th International Workshop on Description Logics (DL 2013)},
  volume 1014 of {\em CEUR Workshop Proceedings}, pages 600--615, 2013.

\bibitem{casinistraccia2010}
G.~Casini and U.~Straccia.
\newblock {Rational Closure for Defeasible Description Logics}.
\newblock In T.~Janhunen and I.~Niemel{\"a}, editors, {\em Proc. 12th European
  Conf. on Logics in Artificial Intelligence (JELIA 2010)}, volume 6341 of {\em
  LNCS}, pages 77--90, Helsinki, Finland, September 2010. Springer.

\bibitem{CasiniStracciaM19}
G.~Casini, U.~Straccia, and T.~Meyer.
\newblock A polynomial time subsumption algorithm for nominal safe
  elo{\(\perp\)} under rational closure.
\newblock {\em Inf. Sci.}, 501:588--620, 2019.

\bibitem{Cintula2011}
P.~Cintula, P.~H\'ajek, and C.~Noguera, editors.
\newblock {\em Handbook of Mathematical Fuzzy Logic}, volume 37-38.
\newblock College Publications, 2011.

\bibitem{GarcezGori2019}
A.~S. d'Avila Garcez, M.~Gori, L.~C. Lamb, L.~Serafini, M.~Spranger, and S.~N.
  Tran.
\newblock Neural-symbolic computing: An effective methodology for principled
  integration of machine learning and reasoning.
\newblock {\em {FLAP}}, 6(4):611--632, 2019.

\bibitem{GarcezLG2009}
A.~S. d'Avila Garcez, L.~C. Lamb, and D.~M. Gabbay.
\newblock {\em Neural-Symbolic Cognitive Reasoning}.
\newblock Cognitive Technologies. Springer, 2009.

\bibitem{CLIP99}
Artur~S. d'Avila Garcez and Gerson Zaverucha.
\newblock The connectionist inductive learning and logic programming system.
\newblock {\em Appl. Intell.}, 11(1):59--77, 1999.

\bibitem{Delgrande:87}
J.~Delgrande.
\newblock A first-order conditional logic for prototypical properties.
\newblock {\em Artificial Intelligence}, 33(1):105--130, 1987.

\bibitem{Delgrande2020}
J.~Delgrande and C.~Rantsoudis.
\newblock A preference-based approach for representing defaults in first-order
  logic.
\newblock In {\em Proc. 18th Int. Workshop on Non-Monotonic Reasoning, NMR2020,
  September 12th - 14th}, 2020.

\bibitem{iclp2020}
L.~Giordano and D.~Theseider Dupr{\'{e}}.
\newblock An {ASP} approach for reasoning in a concept-aware multipreferential
  lightweight {DL}.
\newblock {\em Theory Pract. Log. Program.}, 20(5):751--766, 2020.
\newblock Online Appendix in CoRR, abs/2006.04387, 2020.

\bibitem{NMR2020}
L.~Giordano and D.~Theseider Dupr{\'{e}}.
\newblock A framework for a modular multi-concept lexicographic closure
  semantics.
\newblock volume abs/2009.00964, 2020.
\newblock 18th International Workshop on Non-Monotonic Reasoning, NMR2020,
  September 12th-14th 2020.

\bibitem{AIJ2020}
L.~Giordano and V.~Gliozzi.
\newblock {L. Giordano and L. Gliozzi: A reconstruction of the multipreference
  closure. }.
\newblock {\em Artificial Intelligence}, 290, January 2021.
\newblock to appear.

\bibitem{CILC2020}
L.~Giordano, V.~Gliozzi, and D.~Theseider Dupr{\'{e}}.
\newblock On a plausible concept-wise multipreference semantics and its
  relations with self-organising maps.
\newblock {\em CoRR}, abs/2008.13278, 2020.
\newblock CILC (Italian Conference on Computational Logic), 13-15 October 2020,
  Rende.

\bibitem{lpar2007}
L.~Giordano, V.~Gliozzi, N.~Olivetti, and G.~L. Pozzato.
\newblock {P}referential {D}escription {L}ogics.
\newblock In Nachum Dershowitz and Andrei Voronkov, editors, {\em Proceedings
  of LPAR 2007 (14th Conference on Logic for Programming, Artificial
  Intelligence, and Reasoning)}, volume 4790 of {\em LNAI}, pages 257--272,
  Yerevan, Armenia, October 2007. Springer-Verlag.

\bibitem{FI09}
L.~Giordano, V.~Gliozzi, N.~Olivetti, and G.~L. Pozzato.
\newblock {ALC+T}: a preferential extension of {D}escription {L}ogics.
\newblock {\em Fundamenta Informaticae}, 96:1--32, 2009.

\bibitem{AIJ15}
L.~Giordano, V.~Gliozzi, N.~Olivetti, and G.~L. Pozzato.
\newblock {Semantic characterization of rational closure: From propositional
  logic to description logics}.
\newblock {\em Artificial Intelligence}, 226:1--33, 2015.

\bibitem{Guidotti2019}
R.~Guidotti, A.~Monreale, S.~Ruggieri, F.~Turini, F.~Giannotti, and
  D.~Pedreschi.
\newblock A survey of methods for explaining black box models.
\newblock {\em {ACM} Comput. Surv.}, 51(5):93:1--93:42, 2019.

\bibitem{LutzS_JAIR17}
V.~Guti{\'{e}}rrez{-}Basulto, J.~C. Jung, C.~Lutz, and L.~Schr{\"{o}}der.
\newblock Probabilistic description logics for subjective uncertainty.
\newblock {\em J. Artif. Intell. Res.}, 58:1--66, 2017.

\bibitem{Halpern1990}
J.~Y. Halpern.
\newblock An analysis of first-order logics of probability.
\newblock {\em Artif. Intell.}, 46(3):311--350, 1990.

\bibitem{Haykin99}
S.~Haykin.
\newblock {\em Neural Networks - A Comprehensive Foundation}.
\newblock Pearson, 1999.

\bibitem{Hinton1986}
G.~Hinton.
\newblock Learning distributed representation of concepts.
\newblock In {\em Proceedings 8th Annual Conference of the Cognitive Science
  Society. Erlbaum, Hillsdale, NJ}, 1986.

\bibitem{Lukasiewicz2020}
P.~Hohenecker and T.~Lukasiewicz.
\newblock Ontology reasoning with deep neural networks.
\newblock {\em J. Artif. Intell. Res.}, 68:503--540, 2020.

\bibitem{Kern-Isberner01}
G.~Kern{-}Isberner.
\newblock {\em Conditionals in Nonmonotonic Reasoning and Belief Revision -
  Considering Conditionals as Agents}, volume 2087 of {\em Lecture Notes in
  Computer Science}.
\newblock Springer, 2001.

\bibitem{Kern-Isberner2014}
G.~Kern{-}Isberner and C.~Eichhorn.
\newblock Structural inference from conditional knowledge bases.
\newblock {\em Stud Logica}, 102(4):751--769, 2014.

\bibitem{kohonen2001}
T.~Kohonen, M.R. Schroeder, and T.S. Huang, editors.
\newblock {\em Self-Organizing Maps, Third Edition}.
\newblock Springer Series in Information Sciences. Springer, 2001.

\bibitem{Kosko92}
Bart Kosko.
\newblock {\em Neural networks and fuzzy systems: a dynamical systems approach
  to machine intelligence}.
\newblock Prentice Hall, 1992.

\bibitem{KrausLehmannMagidor:90}
S.~Kraus, D.~Lehmann, and M.~Magidor.
\newblock Nonmonotonic reasoning, preferential models and cumulative logics.
\newblock {\em Artificial Intelligence}, 44(1-2):167--207, 1990.

\bibitem{GarcezGori2020}
L.~C. Lamb, A.~S. d'Avila Garcez, M.~Gori, M.~O.~R. Prates, P.~H.~C. Avelar,
  and M.~Y. Vardi.
\newblock Graph neural networks meet neural-symbolic computing: {A} survey and
  perspective.
\newblock In Christian Bessiere, editor, {\em Proceedings of the Twenty-Ninth
  International Joint Conference on Artificial Intelligence, {IJCAI} 2020},
  pages 4877--4884. ijcai.org, 2020.

\bibitem{whatdoes}
D.~Lehmann and M.~Magidor.
\newblock What does a conditional knowledge base entail?
\newblock {\em Artificial Intelligence}, 55(1):1--60, 1992.

\bibitem{Lehmann95}
D.~J. Lehmann.
\newblock Another perspective on default reasoning.
\newblock {\em Ann. Math. Artif. Intell.}, 15(1):61--82, 1995.

\bibitem{Lukasiewicz08}
T.~Lukasiewicz.
\newblock Expressive probabilistic description logics.
\newblock {\em Artif. Intell.}, 172:852--883, 2008.

\bibitem{LukasiewiczStraccia08}
T.~Lukasiewicz and U.~Straccia.
\newblock Managing uncertainty and vagueness in description logics for the
  semantic web.
\newblock {\em J. Web Semant.}, 6(4):291--308, 2008.

\bibitem{LukasiewiczStraccia09}
T.~Lukasiewicz and U.~Straccia.
\newblock Description logic programs under probabilistic uncertainty and fuzzy
  vagueness.
\newblock {\em Int. J. Approx. Reason.}, 50(6):837--853, 2009.

\bibitem{LutzS_KR10}
C.~Lutz and L.~Schr{\"{o}}der.
\newblock Probabilistic description logics for subjective uncertainty.
\newblock In {\em Principles of Knowledge Representation and Reasoning:
  Proceedings of the Twelfth International Conference, {KR} 2010, Toronto,
  Ontario, Canada, May 9-13, 2010}. {AAAI} Press, 2010.

\bibitem{Makinson88}
David Makinson.
\newblock General theory of cumulative inference.
\newblock In {\em Non-Monotonic Reasoning, 2nd International Workshop, Grassau,
  FRG, June 13-15, 1988, Proceedings}, pages 1--18, 1988.

\bibitem{DeepProbLog18}
R.~Manhaeve, S.~Dumancic, A.~Kimmig, T.~Demeester, and L.~De Raedt.
\newblock Deepproblog: Neural probabilistic logic programming.
\newblock In {\em Advances in Neural Information Processing Systems 31: Annual
  Conference on Neural Information Processing Systems 2018, NeurIPS 2018, 3-8
  December 2018, Montr{\'{e}}al, Canada}, pages 3753--3763, 2018.

\bibitem{Plunkett98}
P.~{McLeod}, K.~Plunkett, and E.T. Rolls, editors.
\newblock {\em Introduction to Connectionist Modelling of Cognitive Processes}.
\newblock Oxford university Press, 1998.

\bibitem{Pearl:88}
J.~Pearl.
\newblock {\em Probabilistic Reasoning in Intelligent Systems Networks of
  Plausible Inference}.
\newblock Morgan Kaufmann, 1988.

\bibitem{Pearl90}
J.~Pearl.
\newblock System {Z:} {A} natural ordering of defaults with tractable
  applications to nonmonotonic reasoning.
\newblock In {\em Proceedings of the 3rd Conference on Theoretical Aspects of
  Reasoning about Knowledge (TARK'90), Pacific Grove, CA, USA, March 1990},
  pages 121--135. Morgan Kaufmann, 1990.

\bibitem{Penaloza2017}
R.~Pe{\~{n}}aloza and N.~Potyka.
\newblock Towards statistical reasoning in description logics over finite
  domains.
\newblock In {\em Scalable Uncertainty Management - 11th International
  Conference, {SUM} 2017, Granada, Spain, October 4-6, 2017, Proceedings},
  volume 10564 of {\em Lecture Notes in Computer Science}, pages 280--294.
  Springer, 2017.

\bibitem{Pensel18}
M.~Pensel and A.~Turhan.
\newblock Reasoning in the defeasible description logic ${ EL}_\bot$ -
  computing standard inferences under rational and relevant semantics.
\newblock {\em Int. J. Approx. Reasoning}, 103:28--70, 2018.

\bibitem{Pinkas95}
G.~Pinkas.
\newblock Reasoning, nonmonotonicity and learning in connectionist networks
  that capture propositional knowledge.
\newblock {\em Artif. Intell.}, 77(2):203--247, 1995.

\bibitem{SurveyDeRaedt2020}
L.~De Raedt, S.~Dumancic, R.~Manhaeve, and G.~Marra.
\newblock From statistical relational to neuro-symbolic artificial
  intelligence.
\newblock In Christian Bessiere, editor, {\em Proceedings of the Twenty-Ninth
  International Joint Conference on Artificial Intelligence, {IJCAI} 2020},
  pages 4943--4950. ijcai.org, 2020.

\bibitem{DeRaedt2016}
Luc~De Raedt, Kristian Kersting, Sriraam Natarajan, and David Poole.
\newblock {\em Statistical Relational Artificial Intelligence: Logic,
  Probability, and Computation}.
\newblock Synthesis Lectures on Artificial Intelligence and Machine Learning.
  Morgan {\&} Claypool Publishers, 2016.

\bibitem{Scarselli2009}
F.~Scarselli, M.~Gori, A.~C. Tsoi, M.~Hagenbuchner, and G.~Monfardini.
\newblock The graph neural network model.
\newblock {\em {IEEE} Trans. Neural Networks}, 20(1):61--80, 2009.

\bibitem{SerafiniG16}
L.~Serafini and A.~S. d'Avila Garcez.
\newblock Learning and reasoning with logic tensor networks.
\newblock In {\em AI*IA 2016: Advances in Artificial Intelligence - XVth Int.
  Conf. of the Italian Association for Artificial Intelligence, Genova, Italy,
  November 29 - December 1, 2016, Proceedings}, volume 10037 of {\em LNCS},
  pages 334--348. Springer.

\bibitem{Socher2013}
R.~Socher, D.~Chen, C.~D. Manning, and A.~Y. Ng.
\newblock Reasoning with neural tensor networks for knowledge base completion.
\newblock In Christopher J.~C. Burges, L{\'{e}}on Bottou, Zoubin Ghahramani,
  and Kilian~Q. Weinberger, editors, {\em Advances in Neural Information
  Processing Systems 26: 27th Annual Conference on Neural Information
  Processing Systems 2013. Proceedings of a meeting held December 5-8, 2013,
  Lake Tahoe, Nevada, United States}, pages 926--934, 2013.

\bibitem{Stoilos05}
G.~Stoilos, G.~B. Stamou, V.~Tzouvaras, J.~Z. Pan, and I.~Horrocks.
\newblock Fuzzy {OWL:} uncertainty and the semantic web.
\newblock In {\em Proceedings of the OWLED*05 Workshop on {OWL:} Experiences
  and Directions, Galway, Ireland, November 11-12, 2005}, volume 188 of {\em
  {CEUR} Workshop Proceedings}. CEUR-WS.org, 2005.

\bibitem{Straccia05}
U.~Straccia.
\newblock Towards a fuzzy description logic for the semantic web (preliminary
  report).
\newblock In {\em The Semantic Web: Research and Applications, Second European
  Semantic Web Conference, {ESWC} 2005, Heraklion, Crete, Greece, May 29 - June
  1, 2005, Proceedings}, volume 3532 of {\em Lecture Notes in Computer
  Science}, pages 167--181. Springer, 2005.

\bibitem{KBANN94}
G.~G. Towell and J.~W. Shavlik.
\newblock Knowledge-based artificial neural networks.
\newblock {\em Artif. Intell.}, 70(1-2):119--165, 1994.

\bibitem{Wilhelm2019}
M.~Wilhelm, G.~Kern{-}Isberner, A.~Ecke, and F.~Baader.
\newblock Counting strategies for the probabilistic description logic
  \emph{ALC}{\^{}}{\textbackslash}mathsf {ME} under the principle of maximum
  entropy.
\newblock In {\em Logics in Artificial Intelligence - 16th European Conference,
  {JELIA} 2019, Rende, Italy, May 7-11, 2019, Proceedings}, volume 11468 of
  {\em Lecture Notes in Computer Science}, pages 434--449. Springer, 2019.

\bibitem{NeurASP2020}
Z.~Yang, A.~Ishay, and J.~Lee.
\newblock Neurasp: Embracing neural networks into answer set programming.
\newblock In C.~Bessiere, editor, {\em Proceedings of the Twenty-Ninth
  International Joint Conference on Artificial Intelligence, {IJCAI} 2020},
  pages 1755--1762. ijcai.org, 2020.

\bibitem{Zadeh1968}
L.~Zadeh.
\newblock Probability measures of fuzzy events.
\newblock {\em J.Math.Anal.Appl}, 23:421--427, 1968.

\end{thebibliography}
\end{document}